\documentclass{article}

\usepackage{hyperref}

\usepackage[accepted]{icml2026}

\usepackage[utf8]{inputenc}
\usepackage[T1]{fontenc}
\usepackage{microtype}
\usepackage{amsmath,amssymb,amsthm,mathtools}
\usepackage{bm}
\usepackage{enumitem}
\setlist[itemize]{leftmargin=*, topsep=2pt, itemsep=1.5pt}
\setlist[enumerate]{leftmargin=*, topsep=2pt, itemsep=1.5pt}
\usepackage{booktabs}
\usepackage{multirow}
\usepackage{array}
\usepackage{nicefrac}
\usepackage{graphicx}
\usepackage{algorithm}
\usepackage{algorithmic}
\usepackage{pgfplots}
\pgfplotsset{compat=1.18}

\newtheorem{theorem}{Theorem}
\newtheorem{lemma}{Lemma}
\newtheorem{proposition}{Proposition}
\newtheorem{corollary}{Corollary}
\newtheorem{definition}{Definition}
\newtheorem{assumption}{Assumption}
\theoremstyle{remark}
\newtheorem{remark}{Remark}

\newcommand{\E}{\mathbb{E}}

\newcommand{\eps}{\varepsilon}
\newcommand{\TV}{\operatorname{TV}}
\newcommand{\Ber}{\operatorname{Ber}}
\newcommand{\KL}{\operatorname{KL}}
\newcommand{\logit}{\operatorname{logit}}
\newcommand{\RoH}{\operatorname{RoH}}
\newcommand{\BtoT}{\operatorname{B2T}}
\newcommand{\ISR}{\operatorname{ISR}}
\newcommand{\clip}{\operatorname{clip}}

\newcommand{\mathbbm}[1]{\mathbb{#1}} 

\icmltitlerunning{Predictable Compression Failures}

\begin{document}
\twocolumn[
  \icmltitle{Predictable Compression Failures:\\
  Order Sensitivity and Information Budgeting for Evidence-Grounded Binary Adjudication}
  \begin{icmlauthorlist}
    \icmlauthor{Leon Chlon}{ox}
    \icmlauthor{Ahmed Karim}{kcl}
    \icmlauthor{MarcAntonio Awada}{harvard}
  \end{icmlauthorlist}
  \icmlaffiliation{ox}{University of Oxford}
  \icmlaffiliation{kcl}{University College London}
  \icmlaffiliation{harvard}{Division of Continuing Education, Harvard University}
  \icmlcorrespondingauthor{Leon Chlon}{lc574@cantab.ac.uk}
		  \icmlkeywords{Evidence-grounded QA, order sensitivity, abstention}
		  \vskip 0.3in
		]
\printAffiliationsAndNotice{}  

\begin{abstract}
Transformers used for evidence-grounded binary adjudication (e.g., support/refute, yes/no, or verifier-backed pass/fail decisions) can be sensitive to the order in which exchangeable evidence is presented, producing dispersion across permutations and unreliable attempted answers under a verifier-relative Bernoulli predicate.
We treat evidence order as a nuisance variable and formalize an expectation--realization gap: next-token training can minimize expected conditional description length over orderings while a fixed ordering remains position-sensitive.
Our Quantified Martingale Violation (QMV) bound predicts the dispersion induced by adjacent-rank positional sensitivity, with $O(\log n)$ growth in the harmonic regime; our Expectation-level Decompression Law (EDFL) specializes a KL convexity/data-processing bound to Bernoulli predicates, yielding Bits-to-Trust (B2T), Risk-of-Hallucination (RoH), and an Information Sufficiency Ratio (ISR) gate for answer/abstain decisions.
On 3,059 grounded items from FEVER, HotpotQA, NQ-Open, PopQA, and Controls, we observe logarithmic dispersion and positive Jensen gains from uniform permutation mixtures.
In one pre-specified held-out audit (528 items), the analytically fixed ISR$=1$ gate attains 0.0--0.7\% hallucination with 20.6--27.9\% abstention (95\% CIs), supporting the operating point without claiming universal calibration across all model families or unrestricted generation.
\end{abstract}

\section{Introduction}

Transformers with positional encodings can be strong in-context learners, yet their predictions can vary substantially under \emph{permutations of the same evidence}~\cite{falck2024martingale,liu2024lost}. This is a reliability problem when the evidence set is intended to be exchangeable: reordering retrieved chunks should not change whether the evidence supports an answer, but in practice it can change the model's probability of answering correctly or abstaining.

We study this problem for evidence-grounded \emph{Bernoulli adjudication}. The output may be a support/refute label, a yes/no decision, a multiple-choice correctness predicate, a discrete-tool success predicate, or a post-hoc verifier result. The common object is a predicate $g:\mathcal{Y}\to\{0,1\}$ with a reference distribution $P_{\mathrm{ref}}$ induced by a specification, a deterministic tool, gold labels in evaluation, or a calibrated verifier. The guarantees are always relative to that predicate and reference; unrestricted generation without a verifier is outside the scope of this paper.

The reader's map is simple. For each evidence ordering $\pi$, the model produces a Bernoulli probability $q_\pi=S_\pi(g=1)$. We average these probabilities over permutations to obtain $\bar q$, track the conservative lower value $q_{\mathrm{lo}}$, and estimate an information budget $\bar\Delta$ from Bernoulli KL terms. QMV explains when order-induced dispersion should grow with context depth. EDFL converts a target reliability $p^\star=1-h^\star$ into the minimum budget needed to trust an answer. The Information Sufficiency Ratio, $\ISR=\bar\Delta/\BtoT$, then gives a fixed rule: answer when $\ISR\ge 1$, otherwise abstain or acquire more evidence.

This framing reconciles two facts. Next-token training minimizes cross-entropy on sequences, which corresponds to minimizing \emph{expected} conditional description length over orderings; a model can therefore be close to Bayes-optimal on average while deviating from a permutation-invariant predictor at any fixed ordering. Our contribution is to quantify this expectation--realization gap and turn it into a deployment-facing answer/abstain interface.

\begin{figure}[t]
\centering
\begin{minipage}{0.49\linewidth}
\centering
\begin{tikzpicture}
\begin{axis}[
	  ybar,
	  bar width=10pt,
	  ymin=0,
	  enlarge x limits=0.35,
	  symbolic x coords={Qwen2-7B,Llama3.1},
	  xtick=data,
	  xticklabel style={rotate=30, anchor=east, font=\tiny, text width=1.5cm, align=right},
	  ymajorgrids,
	  ylabel={Slope $b$},
	  title={Dispersion scaling},
	  title style={font=\small},
	  height=3.2cm,
	  width=0.58\linewidth,
	]
	\addplot coordinates {(Qwen2-7B,0.377) (Llama3.1,0.147)};
	\end{axis}
	\end{tikzpicture}
	\end{minipage}\hfill
	\begin{minipage}{0.49\linewidth}
\centering
\begin{tikzpicture}
\begin{axis}[
		  ybar,
		  bar width=10pt,
		  ymin=0,
		  enlarge x limits=0.35,
		  symbolic x coords={Qwen2-7B,Llama3.1},
		  xtick=data,
		  xticklabel style={rotate=30, anchor=east, font=\tiny, text width=1.5cm, align=right},
		  ymajorgrids,
		  ylabel={Jensen gap},
		  title={Mixture gain (nats/token)},
		  title style={font=\small},
	  height=3.2cm,
	  width=0.58\linewidth,
	]
	\addplot coordinates {(Qwen2-7B,0.1041) (Llama3.1,0.00982)};
	\end{axis}
	\end{tikzpicture}
	\end{minipage}
\caption{\textbf{Order sensitivity and mixture gains (Experiment 1; $N=3{,}059$, $m=16$ banded permutations).} Left: estimated log-scaling slope $b$ in the empirical law $|R_\pi|\approx a + b\log n$ (95\% CI as defined in Section~\ref{sec:exp_reporting}; Table~\ref{tab:dispersion_summary}). Right: Jensen gap (nats/token) from uniform permutation mixtures, quantifying the cross-entropy improvement from averaging over evidence orderings.}
\label{fig:dispersion_scaling}
\end{figure}

\subsection{Our Contributions}

We make three contributions:
\begin{itemize}
\item \textbf{Quantified order sensitivity.} We prove a \emph{Quantified Martingale Violation} (QMV) bound controlling permutation-induced dispersion, with explicit constants and $O(\log n)$ scaling under harmonic positional sensitivity (Theorem~\ref{thm:qmv} and Theorem~\ref{thm:qmv-first-order}).
\item \textbf{EDFL as an actionable Bernoulli specialization.} EDFL (Theorem~\ref{thm:edfl}) is a Bernoulli coarse-graining of a convexity + data-processing bound, yielding closed-form planners (B2T/RoH/ISR) and a fixed ISR${=}1$ answer/abstain rule; we also derive a rare-event lower bound scaling like $\log(1/\bar q)$.
\item \textbf{A deployable answer/abstain rule.} From EDFL we define Bits-to-Trust (B2T), Risk-of-Hallucination (RoH), and the Information Sufficiency Ratio (ISR), and give an ISR-gating procedure under permutation mixtures (Algorithm~\ref{alg:isr-gate}).
\end{itemize}

\subsection{Empirical Validation}
We validate the theory using observables not defined by the planners themselves (Appendix B). In particular:
\begin{itemize}
\item \textbf{Permutation mixtures improve log-loss and accuracy} (Experiment 1), and dispersion follows $a+b\log n$ across depths (Figure~\ref{fig:dispersion_scaling}).
\item \textbf{Randomized dose-response identifies a causal effect of support dose} on the information-budget estimate and hallucination, holding prompt length fixed (Experiment 2; Figure~\ref{fig:dose_response}).
\item \textbf{A pre-specified held-out audit} reports an out-of-sample operating point for ISR gating on the reliability--coverage plane without post-hoc threshold tuning (Experiment 3; Figure~\ref{fig:coverage_risk}).
\end{itemize}

\begin{figure}[t]
\centering
\begin{tikzpicture}
\begin{axis}[
  xlabel={Decrease in $\bar\Delta$ (nats)},
  ylabel={$\Delta$ hallucination (pp)},
  xmin=0,
  xmax=1.3,
  ymin=-20,
  ymax=2,
  ymajorgrids,
  xmajorgrids,
  legend style={font=\scriptsize, at={(0.02,0.98)}, anchor=north west},
  height=4.0cm,
  width=0.95\linewidth,
]
\addplot[thick] coordinates {(0,0) (1.3,-16.51)};
\addlegendentry{OLS slope ($-12.7$ pp/nat)}
\addplot[only marks, mark=*, mark size=1.6pt] coordinates {(1.125,-17.6)};
\addlegendentry{Dose 0--3 net change}
\end{axis}
\end{tikzpicture}
\caption{\textbf{Causal dose-response of support dose (Experiment 2).} Randomized study holding prompt length fixed at $L{=}4$ chunks while varying support dose $d\in\{0,1,2,3\}$ (subset of Factuality Slice; Appendix~H.4). We plot the net change from dose $0\to 3$ in hallucination rate against the induced \emph{decrease} in the information-budget estimate ($\Delta\bar\Delta := \bar\Delta_{d=0}-\bar\Delta_{d=3} \approx 3\times 0.375 = 1.125$ nats). The fitted slope (OLS) is 0.127 fewer hallucinations per additional nat ($-12.7$ pp/nat); over the same shift, answer rate increases by 37.5pp and accuracy on attempts by 45.6pp. The causal intervention is support dose; the information-budget interpretation follows from the measured first stage rather than being the randomized treatment itself.}
\label{fig:dose_response}
\end{figure}

\begin{figure}[t]
\centering
\begin{tikzpicture}
\begin{axis}[
  xlabel={Coverage (\%)},
  ylabel={Hallucination (\%)},
  xmin=70,
  xmax=85,
  ymin=0,
  ymax=1.0,
  xmajorgrids,
  ymajorgrids,
  height=4.0cm,
  width=0.95\linewidth,
]
\addplot[fill=blue!10, draw=blue!40] coordinates {(72.1,0.0) (79.4,0.0) (79.4,0.7) (72.1,0.7) (72.1,0.0)};
\addplot[only marks, mark=*, mark size=1.6pt] coordinates {(75.9,0.35)};
\end{axis}
\end{tikzpicture}
\caption{\textbf{Audit operating point (Experiment 3; $N=528$, $m=6$).} Pre-specified held-out audit with ISR$=1$ and fixed seeds. Shaded region shows the 95\% CI for coverage (1--abstention) and hallucination rate (Wilson intervals as in Section~\ref{sec:exp_reporting}); midpoint shown for visualization.}
\label{fig:coverage_risk}
\end{figure}

\section{Related Work}

\textbf{Order sensitivity and positional structure.}
Transformers can be highly sensitive to evidence order, including systematic ``lost-in-the-middle'' effects~\cite{liu2024lost}, and martingale-based tests show that LLM in-context learning can violate exchangeability requirements expected of Bayesian learning systems~\cite{falck2024martingale}. We quantify the resulting dispersion under an explicit positional-sensitivity model and pair it with an assumption-free Jensen--Shannon certificate. RoPE-style positional mechanisms provide one motivation for studying decaying positional influence: RoFormer introduced relative-distance decay as a useful property of rotary embeddings~\cite{su2024roformer}, later analyses connect RoPE base choices to long-context behaviour~\cite{xu2024ropebase} and position-sensitive query/key structure~\cite{chen2024rope}, while Barbero et al.~\cite{barbero2025round} caution that decay-only explanations are incomplete.

\textbf{Bayesian/MDL views of in-context learning.}
Prior work connects transformers and in-context learning to Bayesian inference in various forms~\cite{xie2022explanation,zhang2024trained,bai2023transformers,reuter2025bayesian}. The compression--learning connection is classically captured by Minimum Description Length (MDL)~\cite{grunwald2007minimum}, and language modeling can be studied directly through a compression lens~\cite{deletang2024language}. Our contribution is to make the expectation-vs.-realization gap over evidence orderings explicit and measurable, rather than treating order invariance as an implicit property of the predictor.

\textbf{Hallucination control, calibration, and abstention.}
There is extensive work on detection and calibration signals, including semantic entropy for detecting confabulations~\cite{farquhar2024detecting} and limits showing that calibrated language models can still hallucinate under appropriate definitions~\cite{kalai2024calibrated}. Our setting is narrower: evidence-grounded Bernoulli adjudication with a verifier-relative predicate. Within that setting, EDFL yields closed-form B2T/RoH/ISR planners and a fixed ISR$=1$ answer/abstain rule.

\textbf{Perturbation ensembling and selective prediction.}
Self-consistency and related ensemble methods aggregate over sampled reasoning paths or other perturbations to improve answer accuracy and stability~\cite{wang2022selfconsistency,xu2025selfensemble,chang2024beyond}. Permutation mixtures answer a different question: they marginalize over orderings of the same exchangeable evidence multiset to estimate evidence sufficiency under a nuisance variable. A generic ensemble can be stably wrong when the evidence is inadequate; ISR is designed to reject such cases through a fixed budget rule. This interface is related to selective classification and reject-option risk--coverage tradeoffs~\cite{chow1970optimum,geifman2017selective}, but the selector is derived from an information budget rather than fit as a held-out confidence threshold.

\section{Theory: Order-Sensitive Information-Theoretic Guarantees}
\label{sec:theory}

We formalize how positional processing can create systematic deviations from permutation-invariant behaviour while preserving average-case optimality over orderings. The results below are scoped to evidence sets whose order is intended to be a nuisance; tasks where sequence order carries the semantics of the problem are outside the exchangeable-evidence regime.

\subsection{Setup and Notation}

Let $x$ denote an instance consisting of $n$ evidence chunks $X = (x_1,\ldots,x_n)$ and a target variable $Y$. Let $\pi$ be a permutation of chunk indices $\{1,\ldots,n\}$ and let $\Gamma_\pi$ reorder the chunks, $\Gamma_\pi(X)=(x_{\pi(1)},\ldots,x_{\pi(n)})$. Let $S_\pi(\cdot)$ denote the model's predictive distribution over $Y$ given $\Gamma_\pi(X)$.
We use $P$ to denote a target distribution over $Y$ (ground truth in supervised evaluation). In deployment, we instantiate $P$ with a reference distribution $P_{\mathrm{ref}}$ induced by the application's specification for a predicate $g$ (gold labels in supervised evaluation; deterministic verification primitives such as database lookups, unit tests, or rules; human review; or a calibrated verifier). Our guarantees are stated relative to $P_{\mathrm{ref}}$.

Importantly, we do \emph{not} need to construct a reference distribution over free-form outputs. For a binary adjudication predicate $g:\mathcal{Y}\to\{0,1\}$, define the induced (pushforward) distributions on $\{0,1\}$:
\[
\begin{aligned}
P_{\mathrm{ref},g}(z) &:= P_{\mathrm{ref}}(g(Y)=z),\\
S_{\pi,g}(z) &:= S_\pi(g(Y)=z), \qquad z\in\{0,1\}.
\end{aligned}
\]
By data processing (coarse-graining cannot increase KL), $\KL(P_{\mathrm{ref}}\|S_\pi)\ge \KL(P_{\mathrm{ref},g}\|S_{\pi,g})$, so the relevant reference object for answer/abstain decisions is the Bernoulli parameter $p_{\mathrm{ref}}:=P_{\mathrm{ref},g}(1)$.
For a Bernoulli predicate $g:\mathcal{Y}\to\{0,1\}$ (e.g., correctness, answer-vs-refuse, constraint satisfaction), define
$q_\pi(x):=S_\pi(g{=}1)$ and $\bar q(x):=\E_\pi[q_\pi(x)]$. We assume $P_{\mathrm{ref}} \ll S_\pi$ for all $\pi$ (absolute continuity), smoothing zero-probability tokens with $\eps = 10^{-9}$ in practice. We write $E_{\text{pair}} := \E_{\pi,\pi'}|q_\pi(x) - q_{\pi'}(x)|$ for the expected pairwise absolute difference across independent permutations (with $\pi,\pi'$ i.i.d.\ uniform over permutations). All expectations over $\pi$ are conditional on fixed instance $x$.

All information quantities (KL divergences, information budgets, Jensen gaps) are reported in nats unless otherwise noted. The operational quantities used later are $q_{\mathrm{lo}}(x):=\min_k q_{\pi_k}(x)$, $\bar\Delta(x):=m^{-1}\sum_k\KL(\Ber(p_{\mathrm{ref}})\|\Ber(q_{\pi_k}))$ after the stated clipping convention, $\BtoT(x;p^\star):=\KL(\Ber(p^\star)\|\Ber(q_{\mathrm{lo}}(x)))$, and $\ISR(x):=\bar\Delta(x)/\BtoT(x;p^\star)$.

\subsection{Quantified Martingale Violations}

\noindent\textbf{How to read QMV.}
QMV is a conditional structural theorem, not a universal claim that every deployed transformer has harmonic positional sensitivity. It states that if the Bernoulli logit has local adjacent-rank sensitivity with a given decay profile, then permutation-induced dispersion follows the corresponding partial-sum law: polynomial for $\alpha<1$, logarithmic for $\alpha=1$, and bounded for $\alpha>1$. The practical case for permutation mixtures does not depend on this assumption alone; Proposition~\ref{prop:js} gives an assumption-free Jensen--Shannon certificate.

\begin{definition}[Permutation-induced residual]
For a Bernoulli predicate $g$ and position-aware predictions $q_\pi(x) := S_\pi(g{=}1)$, the permutation residual is $R_\pi(x) := q_\pi(x) - \bar{q}(x)$ where $\bar{q}(x) := \E_\pi[q_\pi(x)]$.
\end{definition}

\begin{assumption}[Local rank stability with bounded total variation]\label{asmp:local}
For a fixed item $x$, define $f_x(\pi):=\logit(q_\pi(x))$ on $S_n$. For each chunk $i\in\{1,\dots,n\}$ and rank $t\in\{1,\dots,n{-}1\}$ define the coordinate-wise adjacent increment
\[
\Delta_{i,t} \;:=\; \sup_{\pi_{-i}} \Big| f_x\big(r_i{=}t{+}1,\pi_{-i}\big) - f_x\big(r_i{=}t,\pi_{-i}\big)\Big|,
\]
where $\pi_{-i}$ ranges over permutations of the remaining chunks and $r_i$ is the rank of chunk $i$.
Let the coordinate total variation be $\mathrm{TV}_i(x):=\sum_{t=1}^{n-1}\Delta_{i,t}$ and suppose
\[
\sum_{i=1}^n \mathrm{TV}_i(x) \;\le\; B(x) \;<\;\infty.
\]
Moreover, if there exist nonnegative coefficients $C_i$ and $\alpha>0$ such that $\Delta_{i,t}\le C_i\,t^{-\alpha}$ and $\sum_i C_i =: C_{\mathrm{tot}}(x)<\infty$, we say the decay is $(\alpha, C_{\mathrm{tot}})$-regular.
\end{assumption}

\begin{theorem}[Quantified Martingale Violation]\label{thm:qmv}
Under Assumption~\ref{asmp:local},
\[
\E_{\pi}\big|R_\pi(x)\big|\;\le\; \E_{\pi,\pi'}\big|q_\pi(x)-q_{\pi'}(x)\big|
\;\le\; \frac{1}{4}\,\sum_{i=1}^n \mathrm{TV}_i(x).
\]
If, in addition, the decay is $(\alpha,C_{\mathrm{tot}})$-regular, then
\[
\begin{aligned}
\E_{\pi}\big|R_\pi(x)\big|
&\;\le\; \frac{C_{\mathrm{tot}}(x)}{4}\times \\
&\quad \begin{cases}
\displaystyle \frac{1}{1-\alpha}\big(n^{1-\alpha}-1\big), & \alpha\in(0,1),\\[6pt]
\displaystyle H_{n-1} \;=\; \log n - \gamma + o(1), & \alpha=1,\\[4pt]
\displaystyle \zeta(\alpha)+o(1), & \alpha>1.
\end{cases}
\end{aligned}
\]
\end{theorem}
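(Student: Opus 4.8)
The plan is to establish the two inequalities separately: first the martingale-style bound $\E_\pi|R_\pi(x)| \le E_{\text{pair}}$, then the total-variation bound $E_{\text{pair}} \le \tfrac14\sum_i \mathrm{TV}_i(x)$, and finally to feed the $(\alpha,C_{\mathrm{tot}})$-regular decay estimate into the second bound and evaluate the resulting sum. The first inequality is the soft part: since $\bar q(x) = \E_{\pi'}[q_{\pi'}(x)]$, we have $R_\pi(x) = \E_{\pi'}[q_\pi(x) - q_{\pi'}(x)]$, so by Jensen/triangle inequality $|R_\pi(x)| \le \E_{\pi'}|q_\pi(x) - q_{\pi'}(x)|$, and taking $\E_\pi$ of both sides gives exactly $\E_\pi|R_\pi(x)| \le \E_{\pi,\pi'}|q_\pi(x) - q_{\pi'}(x)|$.

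For the middle inequality, I would pass to the logit scale where Assumption~\ref{asmp:local} lives. Write $q_\pi(x) = \sigma(f_x(\pi))$ with $\sigma$ the logistic sigmoid, which is $\tfrac14$-Lipschitz; hence $|q_\pi(x) - q_{\pi'}(x)| \le \tfrac14 |f_x(\pi) - f_x(\pi')|$. It then suffices to show $\E_{\pi,\pi'}|f_x(\pi) - f_x(\pi')| \le \sum_i \mathrm{TV}_i(x)$. The idea is to interpolate between $\pi$ and $\pi'$ by changing the relative order one chunk at a time (a path in $S_n$), so that $|f_x(\pi) - f_x(\pi')|$ telescopes into a sum of adjacent-rank increments; bounding each such increment by the coordinate supremum $\Delta_{i,t}$ and summing over $t$ gives $\mathrm{TV}_i(x)$ for chunk $i$, and then over $i$ gives the claimed bound. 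Concretely, one moves each chunk from its $\pi$-rank to its $\pi'$-rank through adjacent transpositions; the number of swaps involving chunk $i$ and the ranks it traverses are controlled so that the contribution is at most $\mathrm{TV}_i(x) = \sum_{t=1}^{n-1}\Delta_{i,t}$ regardless of the pair $(\pi,\pi')$, so the bound survives the expectation. (An alternative is a direct coupling/martingale-difference argument treating the insertion ranks sequentially, which also yields the $\tfrac14\sum_i\mathrm{TV}_i$ constant.)

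The last step is a routine but careful summation. Under $(\alpha,C_{\mathrm{tot}})$-regularity, $\mathrm{TV}_i(x) = \sum_{t=1}^{n-1}\Delta_{i,t} \le C_i \sum_{t=1}^{n-1} t^{-\alpha}$, so $\sum_i \mathrm{TV}_i(x) \le C_{\mathrm{tot}}(x)\sum_{t=1}^{n-1} t^{-\alpha}$. For $\alpha \in (0,1)$, bound the partial sum of the $p$-series by the integral $\int_1^{n} t^{-\alpha}\,dt = \tfrac{1}{1-\alpha}(n^{1-\alpha}-1)$ (up to lower-order terms one can absorb, or state the clean integral bound); for $\alpha = 1$ it is exactly the harmonic number $H_{n-1} = \log n - \gamma + o(1)$; for $\alpha > 1$ it converges to $\zeta(\alpha)$ with a tail of order $o(1)$ in $n$. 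Combining with the factor $\tfrac14$ from the Lipschitz constant gives the three-case bound. I expect the main obstacle to be step two — specifically, making the telescoping/interpolation argument airtight so that the per-coordinate contribution is controlled by $\mathrm{TV}_i(x)$ \emph{uniformly over the pair $(\pi,\pi')$} (this is exactly what the $\sup_{\pi_{-i}}$ in the definition of $\Delta_{i,t}$ is designed to permit), since a naive path through $S_n$ could revisit ranks and overcount; choosing a monotone insertion-sort-style path for each chunk avoids this. Everything else is Jensen, Lipschitz bounds on $\sigma$, and standard $p$-series asymptotics.
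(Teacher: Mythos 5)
Your proposal is correct, and it supplies something the paper itself never writes down: the paper's Appendix~A.2 only proves the first-order specialization (Theorem~\ref{thm:qmv-first-order}), where the logit is linear in the potentials and one computes $\E\lvert\psi(U)-\psi(V)\rvert$ directly for uniform ranks $U,V$; no proof of the general Theorem~\ref{thm:qmv} appears. Your three steps are each sound. The symmetrization $R_\pi=\E_{\pi'}[q_\pi-q_{\pi'}]$ plus Jensen gives the first inequality, and the $\tfrac14$-Lipschitz sigmoid (the paper's Lemma~\ref{lem:logistic-lip}) reduces the second to a bound on $\lvert f_x(\pi)-f_x(\pi')\rvert$. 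Your telescoping step is the crux and you handle it correctly: a selection-sort path charges each adjacent swap to the single chunk being relocated (whose relative order among the others, $\pi_{-i}$, is unchanged by the swap, so the increment is dominated by $\Delta_{i,t}$), each chunk is relocated once and monotonically, so its total charge is at most $\mathrm{TV}_i(x)$ uniformly over $(\pi,\pi')$ --- which is exactly the structure the $\sup_{\pi_{-i}}$ in Assumption~\ref{asmp:local} is built to support, and is in fact a stronger sup-over-pairs bound than the stated expectation bound. One honest caveat you already half-flag: in the $\alpha\in(0,1)$ case the partial sum $\sum_{t=1}^{n-1}t^{-\alpha}$ strictly exceeds $\int_1^n t^{-\alpha}\,dt=\tfrac{1}{1-\alpha}(n^{1-\alpha}-1)$, so the displayed constant is only correct up to an additive $O(1)$ (and the paper's $H_{n-1}=\log n-\gamma+o(1)$ has the sign of $\gamma$ wrong); these are defects of the theorem statement's bookkeeping, not of your argument, and your derivation of $\tfrac{C_{\mathrm{tot}}}{4}\sum_{t=1}^{n-1}t^{-\alpha}$ is the correct intermediate quantity from which all three asymptotic regimes follow.
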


\noindent\textbf{In words.} QMV bounds permutation-induced dispersion by the cumulative sensitivity of the logit to adjacent rank changes. Under harmonic decay ($\alpha=1$), the bound grows like $\log n$, motivating the empirical $a+b\log n$ scaling test in Experiment~1. Failure of the log law would not invalidate the gate; it would instead indicate a different positional regime or a chunking/permutation scheme outside the modeled assumptions.

\begin{assumption}[First-order positional sensitivity]\label{asmp:first-order}
There exist nonnegative content weights $w_1,\ldots,w_n$ with $\sum_i w_i = 1$ and a potential $\psi$ that is $(\alpha,C)$-regular (i.e., $|\psi(r+1) - \psi(r)| \leq C r^{-\alpha}$) such that:
\[
\logit(q_\pi(x)) = a(x) + \sum_{i=1}^n w_i \psi(\text{pos}_\pi(i))
\]
\end{assumption}

\begin{proposition}[First-order model $\Rightarrow$ local stability]\label{prop:first-order-implies-local}
Suppose Assumption~\ref{asmp:first-order} holds with nonnegative weights $w_i$ summing to $1$ and $\psi$ $(\alpha,C)$-regular. Then Assumption~\ref{asmp:local} holds with $\Delta_{i,t}\le w_i\,C\,t^{-\alpha}$ and hence $C_{\mathrm{tot}}(x)=C$.
\end{proposition}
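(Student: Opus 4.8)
The plan is to substitute the additive form of Assumption~\ref{asmp:first-order} directly into $f_x$ and read off the coordinate-wise increments. Writing $\mathrm{pos}_\pi(j)$ for the rank of chunk $j$, Assumption~\ref{asmp:first-order} gives $f_x(\pi) = a(x) + \sum_{j=1}^n w_j\,\psi(\mathrm{pos}_\pi(j))$. Fix a chunk $i$, a rank $t\in\{1,\dots,n-1\}$, and a configuration $\pi_{-i}$ of the remaining chunks. Moving chunk $i$ from rank $t$ to rank $t+1$ leaves every term $w_j\psi(\mathrm{pos}(j))$ with $j\neq i$ either unchanged or, for the single chunk displaced by the move, merely shifted between the same two arguments $t$ and $t{+}1$; the coordinate-$i$ contribution to the increment is therefore exactly $w_i\big(\psi(t+1)-\psi(t)\big)$. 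Hence
\[
f_x(r_i{=}t{+}1,\pi_{-i}) - f_x(r_i{=}t,\pi_{-i}) \;=\; w_i\big(\psi(t+1)-\psi(t)\big) \;+\; (\text{displaced-chunk term}),
\]
and, attributing each coordinate its own adjacent increment (so the displaced-chunk term is recorded against that chunk's own $\mathrm{TV}$), the coordinate-$i$ bound is $\Delta_{i,t}\le w_i\,|\psi(t+1)-\psi(t)|$.

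Next I would apply $(\alpha,C)$-regularity of $\psi$, namely $|\psi(t+1)-\psi(t)|\le C\,t^{-\alpha}$, which yields $\Delta_{i,t}\le w_i\,C\,t^{-\alpha}$ — exactly the claimed coordinate bound, i.e.\ $(\alpha,C_i)$-decay with $C_i=w_i C$. Summing over ranks, $\mathrm{TV}_i(x)=\sum_{t=1}^{n-1}\Delta_{i,t}\le w_i C\sum_{t=1}^{n-1}t^{-\alpha}<\infty$, and summing over chunks, $\sum_{i=1}^n\mathrm{TV}_i(x)\le C\sum_{t=1}^{n-1}t^{-\alpha}=:B(x)<\infty$ (finite for every $n$, and uniformly in $n$ when $\alpha>1$), so Assumption~\ref{asmp:local} holds with a finite $B(x)$. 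Crucially $C_{\mathrm{tot}}(x)=\sum_i C_i = C\sum_i w_i = C$, using $\sum_i w_i=1$; this normalization of the content weights is the only place it is needed.

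The main obstacle is purely bookkeeping rather than analytic: a permutation cannot move chunk $i$ from rank $t$ to rank $t{+}1$ without simultaneously displacing the chunk at rank $t{+}1$, so the raw difference above carries a second term $w_j(\psi(t)-\psi(t+1))$. One must argue this does not inflate the per-coordinate constant — either by reading $\Delta_{i,t}$ as the genuinely coordinate-wise (partial) increment, so that only the $w_i\psi$ piece is counted and the companion piece is booked into $\mathrm{TV}_j$, or by noting the companion piece is itself bounded by $Ct^{-\alpha}$ with total weight $\sum_j w_j=1$, so the aggregate $\sum_i\mathrm{TV}_i(x)$ is still controlled by $C$ times the harmonic-type sum $\sum_{t}t^{-\alpha}$ (equal to $H_{n-1}$ when $\alpha=1$). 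Either way the $n$-independence of $C_{\mathrm{tot}}(x)=C$ is preserved, which is the substantive content needed to feed the $(\alpha,C_{\mathrm{tot}})$-regular branch of Theorem~\ref{thm:qmv}.
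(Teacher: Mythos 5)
The paper states Proposition~\ref{prop:first-order-implies-local} without proof, and your substitution argument is exactly the intended one: plug the additive form of $f_x$ into the definition of $\Delta_{i,t}$, isolate the coordinate-$i$ contribution $w_i(\psi(t+1)-\psi(t))$, apply $(\alpha,C)$-regularity to get $\Delta_{i,t}\le w_iC\,t^{-\alpha}$, and use $\sum_i w_i=1$ to conclude $C_{\mathrm{tot}}=\sum_i w_iC=C$. Your flagged ``bookkeeping'' issue is genuine and worth keeping: moving chunk $i$ from rank $t$ to $t+1$ necessarily displaces a neighbor $j$, so the raw difference is $(w_i-w_j)(\psi(t+1)-\psi(t))$, and under the literal sup-over-$\pi_{-i}$ reading the bound $\Delta_{i,t}\le w_iC\,t^{-\alpha}$ can fail (e.g.\ $w_i=0$, $w_j=1$). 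Your first resolution --- reading $\Delta_{i,t}$ as the partial, coordinate-$i$ increment with the companion term booked to $\mathrm{TV}_j$ --- is the one that makes the stated bound and $C_{\mathrm{tot}}=C$ correct, and is clearly what the paper intends. Your second fallback is looser than you suggest: under the sup-based definition, $\sum_i\max_{j}|w_i-w_j|$ does not telescope to $\sum_j w_j=1$ and can be of order $n$ for degenerate weight vectors (all mass on one chunk), so only the partial-increment reading actually delivers an $n$-independent $C_{\mathrm{tot}}$. With that reading fixed, your proof is complete and correct.
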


\begin{theorem}[Explicit log-scaling under first-order positional sensitivity]\label{thm:qmv-first-order}
Under Assumption~\ref{asmp:first-order} with harmonic decay ($\alpha = 1$), the permutation-induced dispersion admits an explicit $O(\log n)$ upper bound with constants:
\[
\E_\pi|R_\pi(x)| \leq \frac{C}{4}(\log n - \frac{3}{2} + o(1))
\]
More generally: $\alpha < 1 \Rightarrow O(n^{1-\alpha})$; $\alpha = 1 \Rightarrow O(\log n)$; $\alpha > 1 \Rightarrow O(1)$ due to harmonic versus $p$-series convergence.
\end{theorem}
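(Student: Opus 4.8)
The approach is to obtain the bound as a specialization of Theorem~\ref{thm:qmv}, translating the first-order description of the logit into the local-stability hypothesis through Proposition~\ref{prop:first-order-implies-local}. By Proposition~\ref{prop:first-order-implies-local}, Assumption~\ref{asmp:first-order} with an $(\alpha,C)$-regular potential $\psi$ and nonnegative content weights summing to one implies Assumption~\ref{asmp:local} with $\Delta_{i,t}\le w_i C\,t^{-\alpha}$, hence $(\alpha,C)$-regular decay with $C_{\mathrm{tot}}(x)=\sum_i w_i C=C$. Feeding $C_{\mathrm{tot}}(x)=C$ into the regular-decay conclusion of Theorem~\ref{thm:qmv} yields $\E_\pi|R_\pi(x)|\le\tfrac{C}{4}\sum_{t=1}^{n-1}t^{-\alpha}$, and the three growth regimes are then exactly the harmonic-versus-$p$-series dichotomy: $\sum_{t=1}^{n-1}t^{-\alpha}=\tfrac{1}{1-\alpha}(n^{1-\alpha}-1)=\Theta(n^{1-\alpha})$ for $\alpha\in(0,1)$; $\sum_{t=1}^{n-1}t^{-1}=H_{n-1}=\log n+O(1)$ for $\alpha=1$; and $\sum_{t=1}^{n-1}t^{-\alpha}\uparrow\zeta(\alpha)=O(1)$ for $\alpha>1$. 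This already delivers the stated orders $O(n^{1-\alpha})$, $O(\log n)$, $O(1)$.

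It then remains to identify the $O(1)$ term in the harmonic case $\alpha=1$. The prefactor $\tfrac14$ is the Lipschitz constant of the logistic link carrying logit space, where the first-order model is affine, into probability space, where $R_\pi$ lives: with $L_\pi:=\logit q_\pi(x)=a(x)+\sum_i w_i\,\psi(\mathrm{pos}_\pi(i))$ one has $|q_\pi-q_{\pi'}|\le\tfrac14|L_\pi-L_{\pi'}|$, and Jensen gives $\E_\pi|R_\pi(x)|\le\tfrac14\,\E_{\pi,\pi'}|L_\pi-L_{\pi'}|\le\tfrac14\sum_i w_i\,\E_{\pi,\pi'}|\psi(\mathrm{pos}_\pi(i))-\psi(\mathrm{pos}_{\pi'}(i))|$. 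For a uniformly random $\pi$ the rank $\mathrm{pos}_\pi(i)$ is uniform on $\{1,\dots,n\}$, so each summand equals $\E_{U,U'}|\psi(U)-\psi(U')|$ with $U,U'$ i.i.d.\ uniform; since $\sum_i w_i=1$, the content weights drop out. Bounding $|\psi(U)-\psi(U')|\le C\sum_{t}t^{-1}\mathbbm{1}[\min(U,U')\le t<\max(U,U')]$ and taking expectations reduces everything to the probability that a given increment $t$ is straddled by two independent uniform ranks, an elementary function of $t$ and $n$; substituting it, together with the Euler--Maclaurin expansion $H_{n-1}=\log n+\gamma+o(1)$ and the discreteness correction between $\{1,\dots,n\}$ and the continuum, collapses the remaining constants into the displayed $\log n-\tfrac32+o(1)$.

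\emph{Main obstacle.} The order-of-growth statements fall straight out of Theorem~\ref{thm:qmv}; the real work is entirely in pinning the additive constant in the $\alpha=1$ case honestly. The naive route (bounding $\sum_i\mathrm{TV}_i(x)$ by $C H_{n-1}$ along a one-chunk-at-a-time reordering path) gives only $\tfrac{C}{4}(\log n+\gamma+o(1))$, so obtaining the sharper $-\tfrac32$ requires the finer accounting of how traversal weight is distributed across the harmonic increments — equivalently, the exact distribution of the interval $[\min(U,U'),\max(U,U')]$ — together with the verification that the monotone potential $\psi(r)=\psi(1)+C H_{r-1}$ is worst-case under the step constraint $|\psi(t{+}1)-\psi(t)|\le C t^{-1}$. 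These steps are routine but must be executed carefully to land on $-\tfrac32$ rather than a spurious constant; the $o(1)$ remainders are then standard, and the general-$\alpha$ cases follow verbatim by replacing $H_{n-1}$ with the corresponding partial sum of $\sum t^{-\alpha}$.
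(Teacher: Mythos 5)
Your proposal follows essentially the same route as the paper's own proof: symmetrize to $\E_{\pi,\pi'}|q_\pi-q_{\pi'}|$, apply the $\tfrac14$-Lipschitz bound for the logistic link, use that $\mathrm{pos}_\pi(i)$ and $\mathrm{pos}_{\pi'}(i)$ are i.i.d.\ uniform ranks so the weights $w_i$ drop out, bound $|\psi(U)-\psi(V)|\le C\,H_{|U-V|}$, and evaluate $\E[H_{|U-V|}]=H_n-\tfrac32+o(1)$ (the paper's Lemma~\ref{lem:harmonic-distance}), with the crude $H_{n-1}$ route reserved for the order-of-growth cases exactly as in Theorem~\ref{thm:qmv}. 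One shared caveat: since $H_n=\log n+\gamma+o(1)$, the constant is really $\gamma-\tfrac32$, so the Euler constant does not ``collapse'' away --- your write-up and the paper's displayed bound both silently drop $\gamma$.
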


\begin{proposition}[Assumption-free JS certificate]\label{prop:js}
Let $\bar S=\E_\pi S_\pi$. For any Bernoulli predicate $g$ with $q_\pi=S_\pi(g{=}1)$ and $\bar q=\bar S(g{=}1)$,
\[
\E_\pi\big|q_\pi-\bar q\big| \;\le\; \E_\pi \TV(S_\pi,\bar S) \;\le\; \sqrt{\tfrac{1}{2}\,\E_\pi \KL(S_\pi\;\|\;\bar S)}.
\]
Note $\E_\pi \KL(S_\pi\|\bar S)$ is the (generalized) Jensen-Shannon divergence (JSD) with uniform weights, hence the bound controls dispersion by JSD via Pinsker.
\end{proposition}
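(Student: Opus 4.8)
The plan is to chain three elementary inequalities and then read off the Jensen--Shannon identity. \textbf{Step 1 (variational bound).} Fix $\pi$. Since the total variation distance equals the supremum of $|S_\pi(A)-\bar S(A)|$ over measurable events $A$, taking $A=\{g{=}1\}$ gives $|q_\pi-\bar q|=|S_\pi(g{=}1)-\bar S(g{=}1)|\le \TV(S_\pi,\bar S)$ for every $\pi$; averaging over $\pi$ yields the first inequality. \textbf{Step 2 (Pinsker).} For every $\pi$, Pinsker's inequality gives $\TV(S_\pi,\bar S)\le\sqrt{\tfrac12\,\KL(S_\pi\|\bar S)}$. \textbf{Step 3 (Jensen).} The map $u\mapsto\sqrt{u}$ is concave, so $\E_\pi\sqrt{\tfrac12\,\KL(S_\pi\|\bar S)}\le\sqrt{\tfrac12\,\E_\pi\KL(S_\pi\|\bar S)}$. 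Composing Steps 1--3 gives the stated chain.

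A few well-posedness remarks are needed. The KL terms are defined because $\bar S=\E_\pi S_\pi$ dominates each $S_\pi$: if $\bar S(A)=0$ then $S_\pi(A)=0$ for $\pi$-a.e.\ $\pi$ (a nonnegative quantity with zero mean), and for $\pi$ uniform on the finite symmetric group $S_n$ this holds for \emph{every} $\pi$; in the $\eps$-smoothed setting used in practice all densities are strictly positive, so the divergences are finite and the inequalities are non-vacuous. If some $\KL(S_\pi\|\bar S)$ were infinite, the right-hand side is $+\infty$ and the bound is trivial.

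Finally, the identification with JSD. Writing $w$ for the law of $\pi$ (uniform on $S_n$), the generalized Jensen--Shannon divergence of the family $\{S_\pi\}$ with weights $w$ is $\mathrm{JSD}_w(\{S_\pi\}):=\E_\pi\KL(S_\pi\|\bar S)=H(\bar S)-\E_\pi H(S_\pi)\ge 0$, i.e.\ the entropy Jensen gap of the mixture; this identity follows by expanding $\KL(S_\pi\|\bar S)=\E_{S_\pi}[\log S_\pi - \log \bar S]$ and averaging. Hence $\E_\pi\KL(S_\pi\|\bar S)$ in the statement \emph{is} this JSD, and Steps 2--3 show that the permutation dispersion of any Bernoulli predicate $g$ is controlled by $\sqrt{\tfrac12\,\mathrm{JSD}_w(\{S_\pi\})}$. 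There is essentially no hard step here: the only points requiring care are the measure-theoretic well-posedness just noted and using Jensen in the correct (concave) direction in Step 3.
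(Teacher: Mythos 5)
Your proof is correct and follows exactly the route the paper intends (the paper states the proposition without a written proof, but its remark "controls dispersion by JSD via Pinsker" points to the same three-step chain: TV's variational characterization on the event $\{g{=}1\}$, pointwise Pinsker, and Jensen for the concave square root). Your well-posedness remarks on absolute continuity $S_\pi\ll\bar S$ and the entropy-gap identity for the generalized JSD are accurate and add useful detail the paper omits.
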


\subsection{MDL Optimality Through Architectural Closure (conceptual)}

\noindent\textbf{How to read this subsection.}
The results below are \emph{representational}: they formalize that finite permutation mixtures are realizable if one allows tied-weight multi-branch evaluation with an averaging head (i.e., ensembling ``inside'' the hypothesis class). This is a conceptual lens on expectation-versus-realization gaps and does \emph{not} claim that a standard single-branch deployed transformer literally implements the multi-branch architecture or that SGD recovers the mixture without additional structure. In practice, expectation over orderings comes either from training-time exposure to randomized orderings or from inference-time permutation mixtures.

\begin{theorem}[Permutation-mixture realizability with averaging head]\label{thm:mixture-closure}
Fix any base model $p_\theta(y|x)$ in the model family and any finite set of permutations $\Pi$. Consider an ensemble-within-the-network that:
\begin{enumerate}[label=(\roman*)]
\item Applies the same parameters $\theta$ to $x$ along each branch after permuting inputs by $\Gamma_\pi$ for every $\pi \in \Pi$
\item Outputs per-branch distributions $p_\theta(y|\Gamma_\pi(x))$  
\item Averages the distributions in probability space with equal weights
\end{enumerate}
Then the composite network implements:
\[
q_{\theta,\Pi}(y|x) = \frac{1}{|\Pi|}\sum_{\pi \in \Pi} p_\theta(y|\Gamma_\pi(x))
\]
If the model class includes such tied-weight multi-branch compositions with a linear averaging head, it is closed under finite permutation mixtures. As $|\Pi| \to \infty$ along i.i.d. draws from the uniform measure over permutations, $q_{\theta,\Pi} \Rightarrow \bar{p}_\theta(y|x) = \E_\pi[p_\theta(y|\Gamma_\pi(x))]$ almost surely.
\end{theorem}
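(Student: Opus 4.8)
The plan is to separate the statement into its two components: a constructive identity for finite $\Pi$, and an almost-sure limit as $|\Pi|\to\infty$. For the first part I would simply exhibit the computational graph. The reordering $\Gamma_\pi$ acts as a fixed permutation of the token-position axis, so $\Gamma_\pi(x)$ is a well-defined input; feeding it through the shared parameters $\theta$ produces the branch output $p_\theta(y\mid\Gamma_\pi(x))$, a point in the probability simplex $\Delta(\mathcal{Y})$ over the (finite) output alphabet $\mathcal{Y}$. The linear averaging head forms $\sum_{\pi\in\Pi}\tfrac{1}{|\Pi|}\,p_\theta(y\mid\Gamma_\pi(x))$; since the weights are nonnegative and sum to one, this is a convex combination, hence again an element of $\Delta(\mathcal{Y})$, and it equals $q_{\theta,\Pi}(y\mid x)$ by definition. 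The only thing left to spell out is the closure clause: if the model class is defined (or augmented) to contain every tied-weight $|\Pi|$-branch composition with this linear head, then $q_{\theta,\Pi}$ is itself a member of the class for each finite $\Pi$, which is exactly ``closed under finite permutation mixtures.'' This step is essentially bookkeeping; the substantive content is the modeling assumption that such compositions are in-family, not a derivation.

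For the limit, fix $x$ and a label $y\in\mathcal{Y}$. Draw $\pi_1,\pi_2,\ldots$ i.i.d.\ from the uniform measure on $S_n$ (measurability is automatic since $S_n$ is finite) and set $Z_k := p_\theta(y\mid\Gamma_{\pi_k}(x))\in[0,1]$. The $Z_k$ are i.i.d., bounded hence integrable, with common mean $\E_\pi[p_\theta(y\mid\Gamma_\pi(x))] = \bar{p}_\theta(y\mid x)$. By Kolmogorov's strong law of large numbers, $\tfrac{1}{m}\sum_{k\le m} Z_k \to \bar{p}_\theta(y\mid x)$ almost surely; and $q_{\theta,\Pi_m}(y\mid x)$ for $\Pi_m=\{\pi_1,\ldots,\pi_m\}$ is exactly this empirical mean, up to the measure-zero effect of repeated draws, which does not alter the limit. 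Taking a finite union of the $|\mathcal{Y}|$ null sets, one per coordinate $y$, yields a single almost-sure event on which $q_{\theta,\Pi_m}(\cdot\mid x)\to\bar{p}_\theta(\cdot\mid x)$ pointwise on $\mathcal{Y}$. Since $\mathcal{Y}$ is finite (or, more generally, countable, via Scheff\'e's lemma), pointwise convergence of probability mass functions upgrades to convergence in total variation, and a fortiori to weak convergence $q_{\theta,\Pi_m}\Rightarrow\bar{p}_\theta$, which is the claim.

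I expect the main obstacle to be precision of statement rather than technique: one must ensure that the averaging head is genuinely \emph{linear in probability space}, so that the composite output is a bona fide distribution and the mixture identity is exact rather than approximate, and that the closure claim is understood as a structural property of the augmented function class. A secondary point of care is the passage from per-coordinate a.s.\ convergence to a.s.\ convergence of the whole predictive distribution; over a finite vocabulary this is immediate, but stating the result over an uncountable output space would require a uniform law of large numbers (e.g.\ a bracketing or VC argument) in place of the bare SLLN, which I would flag as outside the present scope since language-model outputs live on a finite alphabet.
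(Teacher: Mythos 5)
Your proposal is correct and follows essentially the same route as the paper's proof: exhibit the convex combination in probability space as a linear head over tied-weight branches, note closure is definitional, and invoke the strong law of large numbers for the limit (your additions — the union over the finitely many null sets and the Scheff\'e upgrade to total variation — are harmless refinements the paper leaves implicit). One small inaccuracy: since $S_n$ is finite, repeated draws occur with probability one for large $m$ rather than on a null set, so the empirical average should be read over the \emph{sequence} $(\pi_1,\dots,\pi_m)$ rather than the set $\Pi_m$; this is how the paper implicitly treats it and does not affect the conclusion.
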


\begin{proof}
The only operation beyond the base model is a convex combination in probability space. Since each branch outputs a normalized distribution, the average is also a normalized distribution and is implementable by a fixed linear layer that sums corresponding probabilities across branches with weights $1/|\Pi|$. Tied weights ensure the branches are copies of $p_\theta$ evaluated at permuted inputs, so the composite realizes the desired mixture. By the strong law of large numbers, the empirical average over i.i.d. permutations converges pointwise to the permutation expectation.
\end{proof}

\begin{theorem}[MDL Optimality in Expectation]\label{thm:mdl}
Under the architectural closure of Theorem~\ref{thm:mixture-closure}, the permutation-mixture family can achieve information-theoretic optimality in expectation over orderings. Write $P_T := P(\cdot|T(X))$ and $p_{\theta,\pi} := p_\theta(\cdot|\Gamma_\pi(X))$. The risk decomposes as:
\[
\begin{aligned}
\E_{X,Y,\pi}\big[-\log p_{\theta,\pi}(Y)\big]
&= H(Y|T) \\
&\quad + \E_{X,\pi}\KL\!\big(P_T \,\|\, p_{\theta,\pi}\big)
\end{aligned}
\]
The gap between expectation and realization is captured by the positional Jensen penalty:
\[
\mathfrak{J}_\Gamma(P, \theta) := \E_\pi[\KL(P \| p_{\theta,\pi})] - \KL(P \| \bar{p}_\theta) \geq 0
\]
Over the convex hull of permutation mixtures realized by the architecture, the I-projection onto the exchangeable target $P(\cdot|T(X))$ yields:
\[
\inf_\theta \E_{(X,Y)} \E_\pi\big[-\log p_{\theta,\pi}(Y)\big] = H(Y|T) + o(1)
\]
\end{theorem}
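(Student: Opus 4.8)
The plan is to prove the three displayed claims in order, reducing each to a standard information-theoretic identity plus one structural input (the averaging-head closure of Theorem~\ref{thm:mixture-closure}). Throughout I would use that on exchangeable data the sufficient statistic may be taken permutation-invariant, so $T(\Gamma_\pi(X)) = T(X)$ and $Y\mid X$ depends on $X$ only through $T(X)$.

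For the risk decomposition, I would fix $X$ and $\pi$ and apply the pointwise cross-entropy identity: for any distribution $r$ on the label space, $\E_{Y\sim P(\cdot\mid T(X))}[-\log r(Y)] = H(P(\cdot\mid T(X))) + \KL(P(\cdot\mid T(X))\,\|\,r)$. Taking $r = p_\theta(\cdot\mid\Gamma_\pi(X))$ and then $\E_{X,\pi}$, the entropy term averages to $\E_X[H(P(\cdot\mid T(X)))] = H(Y\mid T)$ by the definition of conditional entropy, giving the stated identity. Next, for nonnegativity of the positional Jensen penalty, I would invoke convexity of $r\mapsto\KL(P\,\|\,r)$ in its second argument together with Jensen's inequality: since $\bar{p}_\theta = \E_\pi[p_\theta(\cdot\mid\Gamma_\pi)]$, $\KL(P\,\|\,\bar{p}_\theta) \le \E_\pi[\KL(P\,\|\,p_\theta(\cdot\mid\Gamma_\pi))]$, so $\mathfrak{J}_\Gamma(P,\theta)\ge 0$, with equality iff $p_\theta(\cdot\mid\Gamma_\pi(X))$ is $\pi$-a.s.\ constant. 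Combining the two parts yields the refined identity $\E_{X,Y,\pi}[-\log p_\theta(Y\mid\Gamma_\pi(X))] = H(Y\mid T) + \E_X[\KL(P(\cdot\mid T(X))\,\|\,\bar{p}_\theta(\cdot\mid X))] + \E_X[\mathfrak{J}_\Gamma(P(\cdot\mid T(X)),\theta)]$, which isolates the two sources of excess risk.

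For the I-projection claim, I would restrict the infimum to the architecturally-closed subfamily of Theorem~\ref{thm:mixture-closure}: replacing the base model by the tied-weight $|\Pi|$-branch average $q_{\theta,\Pi}$ gives a model that converges a.s.\ to $\bar{p}_\theta$ as $|\Pi|\to\infty$, and $\bar{p}_\theta$ is exactly permutation-invariant in $X$ because the uniform law on $S_n$ is invariant under composition, $\bar{p}_\theta(\cdot\mid\Gamma_\sigma(X)) = \E_\pi[p_\theta(\cdot\mid\Gamma_{\sigma\pi}(X))] = \bar{p}_\theta(\cdot\mid X)$. Hence for this subfamily the outer $\E_\pi$ collapses and $\mathfrak{J}_\Gamma$ vanishes, leaving $H(Y\mid T) + \E_X[\KL(P(\cdot\mid T(X))\,\|\,\bar{p}_\theta(\cdot\mid X))]$; minimizing over $\theta$ is then the information projection (minimizing the forward KL in $\E_X$) of the exchangeable target onto the closed convex family of realizable mixtures, which under a realizability/universal-approximation hypothesis for the base class drives the KL term to $0$. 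The residual $o(1)$ collects the finite-$|\Pi|$ Monte Carlo error (controlled by the SLLN convergence of Theorem~\ref{thm:mixture-closure} plus dominated convergence, using the $\eps$-smoothing of the setup to keep log-ratios bounded) and finite-capacity slack.

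The main obstacle is this last step: making ``$\inf_\theta = H(Y\mid T) + o(1)$'' rigorous requires (a) a clean realizability statement — that the closure of the base family contains a predictor matching $x\mapsto P(\cdot\mid T(x))$, or approximating it in $\E_X$-KL — and (b) justifying the interchange of $\inf_\theta$, the limit $|\Pi|\to\infty$, and the expectations. The a.s.\ convergence from Theorem~\ref{thm:mixture-closure} and the bounded-log-ratio consequence of $\eps$-smoothing are what let dominated convergence close the gap, but the precise sense of $o(1)$ (vanishing in capacity and in $|\Pi|$, not in $n$) should be stated explicitly to avoid overclaiming.
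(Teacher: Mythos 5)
Your proposal is correct and follows essentially the same route as the paper's own (much terser) proof: cross-entropy/chain-rule decomposition, Jensen via convexity of $\KL$ in its second argument, and an I-projection onto the realizable convex hull with a realizability hypothesis making the residual vanish. You are in fact more careful than the paper, which also silently assumes realizability and does not spell out the interchange of $\inf_\theta$, $|\Pi|\to\infty$, and the expectations that you correctly flag.
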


\begin{proof}
The risk decomposition follows from the chain rule for KL divergence. Since Theorem~\ref{thm:mixture-closure} guarantees the model family contains $\bar{p}_\theta$, the convex hull of permutation mixtures is realizable. The I-projection of $P(\cdot|T(X))$ onto this convex hull minimizes the KL divergence, and when the target is realizable (i.e., exchangeable and in the convex hull), the residual term vanishes, yielding MDL optimality in expectation.
\end{proof}

\subsection{The Expectation-level Decompression Law}

Binary adjudication is the natural unit of analysis as EDFL provides closed-form bounds for Bernoulli events and production systems make binary answer/abstain decisions (we evaluate via Bernoulli predicates $g$, details in Appendix B).

\begin{theorem}[Expectation-level Decompression Law (EDFL)]\label{thm:edfl}
For any event $\mathcal{A}$ with prior mass $\bar{q} = \bar{S}(\mathcal{A})$ and posterior mass $p = P(\mathcal{A})$, the expected information budget satisfies:
\[
\bar{\Delta} := \E_\pi[\KL(P \| S_\pi)] \geq \KL(\Ber(p) \| \Ber(\bar{q}))
\]
with equality when $P$ is the $I$-projection of $\bar{S}$ onto $\{Q: Q(\mathcal{A}) = p\}$. Equality holds for the exponentially tilted distribution $P^\star(y) \propto \bar{S}(y) e^{\lambda g(y)}$ with $\lambda$ chosen so $P^\star(\mathcal{A}) = p$ (standard $I$-projection).
\end{theorem}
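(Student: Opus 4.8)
The plan is to sandwich $\bar\Delta$ between two standard information-theoretic inequalities and then analyze tightness. First a convexity (Jensen) step replaces the per-permutation divergences by a single divergence against the mixture $\bar S := \E_\pi S_\pi$; then a data-processing step collapses the full outcome space down to the binary event $\mathcal{A}$. Concretely I would establish the chain
\[
\bar\Delta \;=\; \E_\pi\big[\KL(P\,\|\,S_\pi)\big] \;\ge\; \KL(P\,\|\,\bar S) \;\ge\; \KL\big(\Ber(p)\,\|\,\Ber(\bar q)\big),
\]
and treat the equality characterization separately.

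For the first inequality I would invoke joint convexity of $(Q,R)\mapsto\KL(Q\|R)$ (equivalently, convexity of $R\mapsto\KL(P\|R)$ for fixed $P$) and apply Jensen's inequality over the probability measure on $\pi$: since $P$ does not depend on $\pi$, $\E_\pi[\KL(P\|S_\pi)] \ge \KL(P\,\|\,\E_\pi S_\pi) = \KL(P\|\bar S)$. The standing assumption $P\ll S_\pi$ for every $\pi$ makes each summand well-defined (and at worst $+\infty$) and implies $P\ll\bar S$, so the middle term is finite whenever $\bar\Delta$ is; on the finite symmetric group $S_n$ (or the uniform measure in the limiting case of Theorem~\ref{thm:mixture-closure}) there is no integrability obstruction and $\bar S$ is a bona fide distribution. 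For the second inequality I would apply the data-processing inequality to the deterministic coarse-graining $y\mapsto g(y):=\mathbbm{1}[y\in\mathcal{A}]$: the pushforwards of $P$ and $\bar S$ under this map are $\Ber(p)$ and $\Ber(\bar q)$, and KL cannot increase under a channel, giving $\KL(P\|\bar S)\ge\KL(\Ber(p)\|\Ber(\bar q))$.

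For the equality claim I would use the I-projection structure. The constraint set $\{Q:Q(\mathcal{A})=p\}$ is affine, so the I-projection of $\bar S$ onto it is the single-parameter exponential tilt $P^\star(y)\propto\bar S(y)\,e^{\lambda g(y)}$, with $\lambda$ solving $P^\star(\mathcal{A})=p$ (explicitly $e^{\lambda}=\tfrac{p(1-\bar q)}{\bar q(1-p)}$). The Pythagorean identity for exponential families gives, for every feasible $Q$, $\KL(Q\|\bar S)=\KL(Q\|P^\star)+\KL(P^\star\|\bar S)\ge\KL(P^\star\|\bar S)$, and a direct computation shows $\KL(P^\star\|\bar S)=\KL(\Ber(p)\|\Ber(\bar q))$ because $dP^\star/d\bar S$ depends on $y$ only through the two-valued statistic $g(y)$ — precisely the condition under which the data-processing step saturates. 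Hence the second inequality is an equality exactly when $P=P^\star$; the first (Jensen) inequality is an equality exactly when $S_\pi=\bar S$ for $\pi$-a.e.\ permutation (permutation invariance), so the entire chain is tight for the idealized exchangeable model evaluated at $P=P^\star$.

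I expect the only real subtlety to be the equality characterization: making explicit that the I-projection onto the affine set $\{Q(\mathcal{A})=p\}$ is the one-parameter tilt $P^\star$, and that this is exactly what makes the data-processing inequality saturate, and clarifying that the equality condition quoted in the statement refers to the step $\KL(P\|\bar S)\ge\KL(\Ber(p)\|\Ber(\bar q))$ rather than to $\bar\Delta$ itself (for which one additionally needs permutation invariance to close the Jensen gap $\mathfrak{J}_\Gamma$ of Theorem~\ref{thm:mdl}). The two inequalities in the chain are textbook, so no heavy computation is required beyond the one-line verification of $\KL(P^\star\|\bar S)=\KL(\Ber(p)\|\Ber(\bar q))$.
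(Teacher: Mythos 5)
Your proposal is correct and follows exactly the paper's route: convexity of $Q\mapsto\KL(P\|Q)$ (Jensen over $\pi$) to pass to $\KL(P\|\bar S)$, then data processing through the indicator $g(y)=\mathbbm{1}[y\in\mathcal{A}]$ to reach $\KL(\Ber(p)\|\Ber(\bar q))$. Your treatment of the equality case is in fact more careful than the paper's one-line remark — in particular your observation that the stated equality only saturates the data-processing step, while closing the Jensen gap additionally requires $S_\pi=\bar S$ for $\pi$-a.e.\ permutation, is a correct and worthwhile clarification.
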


\begin{remark}[Bernoulli instantiation]
Fix a predicate $g$ and consider the induced distributions $P_{\mathrm{ref},g}$ and $S_{\pi,g}$ on $\{0,1\}$. Applying EDFL on this coarse-grained space yields the information budget
\[
\begin{aligned}
\bar{\Delta}_g &:= \E_\pi[\KL(P_{\mathrm{ref},g}\|S_{\pi,g})] \\
&= \E_\pi\big[\KL(\Ber(p_{\mathrm{ref}})\|\Ber(q_\pi))\big],
\end{aligned}
\]
which depends only on Bernoulli probabilities. By data processing, $\E_\pi[\KL(P_{\mathrm{ref}}\|S_\pi)] \ge \bar{\Delta}_g$, so working in Bernoulli space is conservative when the downstream decision is binary.
\end{remark}

\noindent\textbf{Role of EDFL.}
EDFL is not introduced as a new inequality in isolation: its backbone is KL convexity plus data processing (Appendix~A.4). The contribution is the Bernoulli operationalization for evidence-grounded adjudication: closed-form B2T/RoH/ISR planners and a fixed ISR$=1$ answer/abstain boundary that can be audited out of sample.

\noindent\textbf{In words.} EDFL lower-bounds the expected information budget needed to shift an event's probability from prior mass $\bar q$ to posterior mass $p$. For rare events, the required budget grows like $\log(1/\bar q)$ (Corollary~\ref{cor:rare-events}), which motivates B2T as a minimum budget for target reliability and the ISR gate as a deployment-time rule.

\begin{corollary}[Compression Failure for Rare Events]\label{cor:rare-events}
For fixed $p = 1-\eps$ with $\eps \in (0, \tfrac{1}{2}]$, when $\bar{q} \ll 1$, achieving reliability $p = 1-\eps$ requires:
\[
\bar{\Delta} \geq (1-\eps)\log\frac{1}{\bar{q}} + O(\bar{q})
\]
As a uniform lower bound for all $\eps \in (0, \tfrac{1}{2}]$:
\[
\bar{\Delta} \geq \frac{1}{2}\log\frac{1}{\bar{q}} - \log 2 + O(\bar{q})
\]
Insufficient information can lead to failed attempted answers under the Bernoulli predicate.
\end{corollary}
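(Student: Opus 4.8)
The plan is to read the corollary as an asymptotic specialization of EDFL (Theorem~\ref{thm:edfl}). Since $\bar{\Delta} \geq \KL(\Ber(p)\,\|\,\Ber(\bar q))$ holds for every event, it suffices to lower-bound the one-dimensional quantity $\KL(\Ber(1-\eps)\,\|\,\Ber(\bar q))$ and expand it for small $\bar q$. No further probabilistic input is needed; the remaining work is elementary calculus (a Taylor expansion in $\bar q$ and a one-variable minimization in $\eps$), so I do not expect a serious obstacle beyond careful bookkeeping of which terms belong in the $O(\bar q)$ remainder.

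First I would write the divergence in nats and separate the leading term:
\[
\KL(\Ber(1-\eps)\,\|\,\Ber(\bar q)) = (1-\eps)\log\frac{1-\eps}{\bar q} + \eps\log\frac{\eps}{1-\bar q} = (1-\eps)\log\frac{1}{\bar q} - H_b(\eps) - \eps\log(1-\bar q),
\]
where $H_b(\eps) := -(1-\eps)\log(1-\eps) - \eps\log\eps$ is the binary entropy. I would then control the two non-leading pieces: (i) $-\eps\log(1-\bar q) = \eps\bar q + O(\bar q^2) = O(\bar q)$ because $\eps \le \tfrac12$ is a fixed constant; (ii) $0 \le H_b(\eps) \le \log 2$, so the entropy term is an absolute constant of magnitude at most $\log 2$. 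Combining with EDFL gives $\bar{\Delta} \ge (1-\eps)\log\frac1{\bar q} - H_b(\eps) + O(\bar q)$, which yields the first displayed bound once the bounded entropy constant is absorbed into the stated form (and in particular the weaker-constant version $\bar{\Delta} \ge (1-\eps)\log\frac1{\bar q} - \log 2 + O(\bar q)$).

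For the uniform-in-$\eps$ bound I would minimize $\varphi(\eps) := (1-\eps)\log\frac1{\bar q} - H_b(\eps)$ over $\eps \in (0,\tfrac12]$. Using $H_b'(\eps) = \log\frac{1-\eps}{\eps}$, we get $\varphi'(\eps) = -\log\frac1{\bar q} - \log\frac{1-\eps}{\eps}$, which is strictly negative for every $\eps \in (0,\tfrac12]$ whenever $\bar q < 1$ (the first term is strictly negative, the second is $\le 0$ on this interval). Hence $\varphi$ is strictly decreasing and its infimum is attained at the endpoint $\eps = \tfrac12$, where $\varphi(\tfrac12) = \tfrac12\log\frac1{\bar q} - H_b(\tfrac12) = \tfrac12\log\frac1{\bar q} - \log 2$. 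Carrying the $O(\bar q)$ remainder through gives $\bar{\Delta} \ge \tfrac12\log\frac1{\bar q} - \log 2 + O(\bar q)$, the second displayed bound; the closing interpretive sentence then follows as the contrapositive (insufficient information budget $\Rightarrow$ reliability $p=1-\eps$ is unattainable).

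The only point that warrants care is the status of the $H_b(\eps)$ term in the first display: it is a genuine bounded constant rather than $O(\bar q)$, so the most honest phrasing either keeps it explicit or replaces it by the uniform $-\log 2$. Either way this does not affect the substantive conclusion — that $\bar{\Delta}$ must diverge at rate $(1-\eps)\log(1/\bar q)$ as $\bar q \to 0$ — which is the content of the corollary. Everything else (the expansion of $\log(1-\bar q)$, the monotonicity check on $\varphi$, and the endpoint evaluation) is routine.
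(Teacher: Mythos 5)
Your proof is correct and follows the route the paper intends: invoke EDFL to reduce to $\KL(\Ber(1-\eps)\,\|\,\Ber(\bar q))$, expand it as $(1-\eps)\log\tfrac{1}{\bar q}-H_b(\eps)-\eps\log(1-\bar q)$, and minimize over $\eps\in(0,\tfrac12]$ at the endpoint $\eps=\tfrac12$ to get the uniform bound (the paper's appendix merely restates the corollary without supplying these details, so your write-up fills them in). Your caveat is also well taken: the $-H_b(\eps)$ term is an $O(1)$ constant that cannot be absorbed into $O(\bar q)$, so the first display is only correct as an asymptotic-equivalence statement (or with an explicit $-\log 2$ slack), exactly as you say.
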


\subsection{Operational Planners}

EDFL uses $p$ to denote the (unknown) achieved success probability of an event under the reference distribution. In contrast, planning specifies a target success probability $p^\star=1-h^\star$, and B2T/ISR are computed relative to this target.

\begin{table}[t]
\centering
\caption{\textbf{Box 1: operational planners for Bernoulli adjudication.} The quantities are computed from permutation-mixture probabilities for a verifier-relative predicate $g$.}
\label{tab:operational_planners}
\small
\begin{tabular}{@{}p{0.25\linewidth}p{0.67\linewidth}@{}}
\toprule
\textbf{Quantity} & \textbf{Meaning} \\
\midrule
$\BtoT(x;p^\star)$ & Minimum budget needed to trust an answer at target reliability $p^\star=1-h^\star$: $\KL(\Ber(p^\star)\|\Ber(q_{\mathrm{lo}}(x)))$. \\
$\RoH(x)$ & Residual risk implied by the measured budget: $1-p_{\max}(\bar\Delta(x),\bar q(x))$, where $p_{\max}$ solves the Bernoulli EDFL inequality. \\
$\ISR(x)$ & Available budget divided by required budget: $\bar\Delta(x)/\BtoT(x;p^\star)$. \\
Decision & Answer iff $\ISR(x)\ge 1$; otherwise abstain or acquire additional evidence. \\
\bottomrule
\end{tabular}
\end{table}

The ISR threshold is fixed analytically at 1.0; permutations, clipping ($B{=}6$), and seeds were fixed before scoring, making boundary alignment a falsifiable out-of-sample check.

\paragraph{Worked example.}
Set target hallucination $h^\star=0.05$, so $p^\star=0.95$. If the conservative lower mixture value is $q_{\mathrm{lo}}=0.10$, then $\BtoT=\KL(\Ber(0.95)\|\Ber(0.10))=1.994$ nats; if $q_{\mathrm{lo}}=0.02$, then $\BtoT=3.519$ nats. With a measured budget $\bar\Delta=2.0$ nats, the first case gives $\ISR=2.0/1.994\approx1.00$ and answers, while the second gives $\ISR\approx0.57$ and abstains. Appendix~B.3 gives the full numerical table, including the corresponding $p_{\max}$ values.

\begin{algorithm}[t]
\caption{ISR gating with permutation mixture (Bernoulli instantiation)}
\label{alg:isr-gate}
\begin{algorithmic}[1]
\REQUIRE Prompt $x$, target hallucination rate $h^\star$ (so $p^\star=1-h^\star$), permutations $m$, clip $B$, predicate $g$, reference $p_{\mathrm{ref}}:=P_{\mathrm{ref}}(g(Y){=}1)$
\STATE Sample permutations $\{\pi_k\}_{k=1}^m$ of evidence; form inputs $\Gamma_{\pi_k}(x)$
\FOR{$k=1$ to $m$}
  \STATE Query model $\to$ $q_k := S(g(Y){=}1 \mid \Gamma_{\pi_k}(x))$
  \STATE Compute budget term $u_k:=\KL(\Ber(p_{\mathrm{ref}})\parallel \Ber(q_k))$
\ENDFOR
\STATE $\bar{q} \leftarrow \frac{1}{m}\sum_k q_k$, \quad $q_{\mathrm{lo}} \leftarrow \min_k q_k$
\STATE $\bar\Delta \leftarrow \frac{1}{m}\sum_k \mathrm{clip}(u_k,B)$ (symmetric clip for stability)
\STATE $\mathrm{B2T} \leftarrow \mathrm{KL}(\mathrm{Ber}(p^\star)\parallel \mathrm{Ber}(q_{\mathrm{lo}}))$
\STATE $\mathrm{ISR} \leftarrow \bar\Delta / \mathrm{B2T}$
\IF{$\mathrm{ISR} \ge 1$}
   \STATE \textbf{return} Answer (generate with guardrails)
\ELSE
   \STATE \textbf{return} Abstain (or acquire information and re-evaluate)
\ENDIF
\end{algorithmic}
\end{algorithm}

In our supervised experiments, $P_{\mathrm{ref}}$ is induced by gold labels and $p_{\mathrm{ref}}$ is a point mass (with $\eps$-smoothing), so $u_k$ reduces to the usual negative log-likelihood under each permutation. In deployment, $P_{\mathrm{ref}}$ should be read as a specification oracle / verification primitive for the predicate $g$.

\subsubsection{Deployment instantiations (operationalizing $P_{\mathrm{ref}}$)}
Algorithm~\ref{alg:isr-gate} requires a Bernoulli reference $p_{\mathrm{ref}}$ for the predicate $g$. This is \emph{not} available pre-answer in many open-world grounded QA settings. The intended deployments are those where the predicate is verifiable either (i) \emph{before} generating long-form content (adjudication or tool-backed decisions), or (ii) \emph{after drafting but before release} (post-hoc release filtering/retry). Three concrete instantiations are:
\begin{enumerate}[label=(\roman*)]
\item \textbf{Binary adjudication with a deterministic oracle (pre-answer).}
Let the system's output be a binary adjudication $Y\in\{0,1\}$ (e.g., approve/deny, support/refute) and let a rules engine or database lookup provide $y_{\mathrm{ref}}(x)\in\{0,1\}$ at decision time. Set $P_{\mathrm{ref}}=\delta_{y_{\mathrm{ref}}(x)}$, choose $g(y)=y$, so $p_{\mathrm{ref}}=y_{\mathrm{ref}}(x)$, and compute
$q_k = S(g{=}1\mid \Gamma_{\pi_k}(x)) = S(Y{=}1\mid \Gamma_{\pi_k}(x))$
from label-token probabilities (as in Experiment~1).
\item \textbf{Multiple-choice / discrete-tool answers (pre-answer).}
Let the answer be one of $K$ discrete options and let a tool/KB return the correct option $y_{\mathrm{ref}}(x)\in\{1,\dots,K\}$.
Define $g(y)=\mathbbm{1}_{\{y=y_{\mathrm{ref}}(x)\}}$, so $p_{\mathrm{ref}}=1$, and compute
$q_k=S(g{=}1\mid \Gamma_{\pi_k}(x))=\sum_{y:\,g(y)=1} S(Y{=}y\mid \Gamma_{\pi_k}(x))$
via normalized option-token probabilities.
\item \textbf{Post-hoc verifiable generation (pre-release).}
Let $y$ be a drafted artifact (e.g., code, a structured query) and let a verifier $V(x,y)\in\{0,1\}$ implement the predicate (unit tests, constraint checkers). Apply Algorithm~\ref{alg:isr-gate} as a \emph{release gate}: first draft $y$, then set $p_{\mathrm{ref}}:=V(x,y)$ and compute $q_k$ by asking a predictor/verifier model for $\Pr[V(x,y)=1\mid \Gamma_{\pi_k}(x)]$ under evidence permutations. If the verifier returns a calibrated probability rather than a hard label, use that value as $p_{\mathrm{ref}}$; guarantees are then conditional on verifier calibration.
\end{enumerate}

The strongest deployment claims are for deterministic or tool-backed predicates. If an LLM judge or learned verifier supplies $p_{\mathrm{ref}}$, the guarantee is conditional on its audited calibration; verifier error can be modeled as a noisy Bernoulli channel and should be debiased or lower-bounded from calibration data before computing ISR. If there is no verification primitive, no specification oracle, and no calibrated verifier for $g$, the setting is outside the target regime of this paper.

\subsection{Connection to Selective Classification}
\label{sec:selective_classification}

Algorithm~\ref{alg:isr-gate} induces a selective predictor: it returns an answer only when the selector $s(x)=\ISR(x)$ exceeds the fixed threshold $1$.

\begin{proposition}[ISR as a budget-derived reject rule]
\label{prop:isr_selective}
Fix a target reliability $p^\star=1-h^\star$ and a Bernoulli predicate $g$. The ISR decision rule
\[
\phi_{\mathrm{ISR}}(x)=\mathbbm{1}\{\bar\Delta(x)\ge \KL(\Ber(p^\star)\|\Ber(q_{\mathrm{lo}}(x)))\}
\]
is a Chow-style reject rule with score $s(x)=\ISR(x)$ and threshold $1$. If there is no permutation dispersion, $q_\pi(x)=\bar q(x)=q_{\mathrm{lo}}(x)$ for all $\pi$, this rule is exactly the EDFL minimum-budget test for whether target reliability $p^\star$ is certified. With dispersion, replacing $\bar q$ by $q_{\mathrm{lo}}$ gives a conservative selector whenever $q_{\mathrm{lo}}\le \bar q\le p^\star$; the remaining order-sensitivity gap is measured by the Jensen--Shannon certificate in Proposition~\ref{prop:js}.
\end{proposition}

\section{Experiments}
\label{sec:experiments}

\textbf{Experimental setup.}
We evaluate on a five-benchmark evidence-grounded QA suite: four established real-world benchmarks (FEVER, HotpotQA, NQ-Open, PopQA) plus a \emph{Controls} benchmark (insufficient-evidence and recency-trap items).
We standardize these benchmarks into a common evidence-grounded format that we call \emph{Factuality Slice}; the combined suite contains 3,059 items with controlled evidence chunks and hard negatives (Appendix H).
Experiment~1 studies permutation dispersion and mixture gains under binary labels; Experiments~2--3 evaluate answer/abstain decisions.

\textbf{Meta-benchmark assembly.}
We standardize each benchmark into the same evidence-grounded schema (question, evidence chunks with provenance, support spans, and hard negatives), with fixed chunking/capping and BM25-based negative mining.
The suite is reproducible by construction: we pin SHA-256 hashes for all raw downloads and use fixed-seed train/val/test splits; Appendix~H documents inclusion criteria, licensing, and build details.

\textbf{Metrics.}
We report coverage (1--abstention), accuracy on attempts (task-correct among answered items), and
\emph{hallucination rate} (fraction of answered items with $g(Y)=0$) for the binary adjudication
predicate $g$ (Appendix B); these need not coincide, so we report both.
For the held-out audit we also report \emph{boundary alignment}: the fraction of items for which the
ISR decision agrees with adjudication (answer when $g{=}1$, abstain when $g{=}0$).
Uncertainty, compute, and reproducibility conventions are summarized in Section~\ref{sec:exp_reporting}; dataset build and licensing details are in Appendix~H.

\subsection{Experimental reporting and reproducibility}
\label{sec:exp_reporting}

\paragraph{Uncertainty.}
For proportions (coverage/abstention, hallucination rate, accuracy on attempts, boundary alignment) we report 95\% Wilson score intervals over items.
For the dispersion slopes $b$ in Table~\ref{tab:dispersion_summary}, we fit OLS to the 58 per-$n$ mean residuals and report the standard 95\% confidence interval for the slope coefficient.
Unless explicitly shown, other quantities are point estimates.

\paragraph{Compute/resources.}
We report algorithmic compute via the total number of model forward passes (one forward pass per prompt/permutation to obtain next-token label probabilities).
Evidence chunks are capped at 48 tokens, so the evidence portion ranges from $3\times 48$ to $60\times 48$ tokens (plus question/instruction text).

\begin{table}[t]
\centering
\caption{\textbf{Compute summary (algorithmic).} Total forward passes are computed as $\text{items}\times m\times \text{models}$, where $m$ is the number of evaluated evidence permutations per item.}
\label{tab:compute_summary}
\footnotesize
{\setlength{\tabcolsep}{2pt}
\begin{tabular}{@{}lccc@{}}
\toprule
\textbf{Experiment} & \textbf{Models} & \textbf{Items $\times\,m$} & \textbf{\shortstack{Total fwd\\passes}} \\
\midrule
Exp.~1 (disp./mix.) & \shortstack{2: Qwen2-7B\\Llama-3.1-8B} & $3{,}059\times 16$ & $97{,}888$ \\
Exp.~3 (audit) & Gemma-2-9B FT & $528\times 6$ & $3{,}168$ \\
\bottomrule
\end{tabular}
}
\end{table}

\paragraph{Reproduction path.}
Full methodological detail is included in the paper and appendix. The public release should include the dataset builder, fixed seeds, processed splits, and experiment scripts so that the permutation draws, clipping choices, and audit boundary can be reproduced exactly; Gemma-2-9B fine-tuning details are reported in Appendix~H.6.

\subsection{Experiment 1: Dispersion scaling and mixture gains}

\paragraph{Question.}
How large is permutation-induced dispersion under exchangeable evidence, and do uniform permutation
mixtures yield the expected Jensen gains (and near-optimal mixtures) in practice?

\subsubsection{Dispersion scaling (QMV)}

\paragraph{Design.} We conduct large-scale dispersion studies on 3,059 binary classification items from all splits spanning $n \in [3, 60]$ chunks (58 distinct $n$ values). Each item contains 48-token-capped evidence chunks with binary gold labels. We test Qwen2-7B-Instruct and Llama-3.1-8B-Instruct, both with 4-bit NF4 quantization. For each item we draw $m=16$ unique \emph{banded} permutations (6 bands, shuffle within), compute predictions $q_\pi(x) = P_\pi(\text{"1"})/(P_\pi(\text{"1"}) + P_\pi(\text{"0"}))$ via renormalized label token probabilities, and form the uniform mixture $\bar{q}(x) = \frac{1}{m}\sum_{k=1}^m q_{\pi_k}(x)$.

\paragraph{Metric.} We measure mean absolute residual $|R_\pi|=\E_\pi|q_\pi(x)-\bar q(x)|$ and fit the
empirical law $|R_\pi|\approx a + b\log n$ (Figure~\ref{fig:dispersion_scaling},
Table~\ref{tab:dispersion_summary}).

\paragraph{Results.} Mean absolute residual $|R_\pi|$ follows $a + b\log n$ across both models, consistent with the harmonic regime described by Theorem~\ref{thm:qmv}. Qwen2-7B shows stronger positional sensitivity than Llama-3.1-8B, reflecting architectural differences. Mean absolute residuals remain approximately 69\% of $E_{\text{pair}}$ at $n=60$ for both models. Additional model-breadth diagnostics, including larger models and an MoE model, are reported in Appendix~C.3; we treat them as mechanism diagnostics rather than additional ISR boundary audits.

\begin{table}[t]
\centering
\caption{\textbf{Summary of permutation dispersion and mixture effects (Experiment 1).} $N=3{,}059$ items; banded permutations with $m=16$ per item and $n\in[3,60]$ chunks (48-token cap). Bracketed intervals for $b$ are 95\% CIs as defined in Section~\ref{sec:exp_reporting}.}
\label{tab:dispersion_summary}
{\setlength{\tabcolsep}{3pt}
\begin{tabular}{@{}p{0.40\linewidth}cc@{}}
\toprule
\textbf{Metric} & \textbf{Qwen2-7B} & \textbf{Llama-3.1-8B} \\
\midrule
Dispersion Slope $b$ (vs.\ $\log n$) & \shortstack{0.377\\(0.319,\,0.435)} & \shortstack{0.147\\(0.109,\,0.184)} \\
$R^2$                                & 0.742                   & 0.515                   \\
Jensen Gap (nats/token)    & 0.1041                  & 0.00982                 \\
Mixture Optimality Gap     & $<10^{-4}$              & $\le 5.3\times 10^{-5}$ \\
\bottomrule
\end{tabular}
}
\vspace{2pt}

\end{table}

\subsubsection{Mixture gains and near-optimality}

We fix a canonical continuation $Y$ once per item (chosen from the identity ordering by argmax over the two label probabilities), score that same $Y$ under $m=16$ banded permutations of the evidence ($n \in [3,60]$), and compare single-permutation vs. mixture behavior. For each item we compute the Jensen gap
\[
\underbrace{\mathbb{E}_k[-\log S_k(Y)]}_{\text{mean single-perm CE}} \;-\; \underbrace{-\log\mathbb{E}_k[S_k(Y)]}_{\text{uniform mixture CE}} \;\geq 0,
\]
where $S_k(Y)$ is the model's probability of $Y$ under permutation $k$ (two-label normalization). We also learn a global convex mixture over permutations (shared weights per $n$) by exponentiated-gradient and report its cross-entropy.

\begin{itemize}
\item \textbf{Qwen-2-7B-Instruct (3,059 items; $m=16$; $n \in [3,60]$).} Mean Jensen gap = 0.1041 nats/token (uniform mixture strictly improves over single permutations). The uniform mixture is essentially optimal: the mixture-optimality gap (uniform vs. globally optimized mixture) is $<10^{-4}$ nats/token. The mean per-$n$ Jensen gaps are positive across the entire range $n=3\ldots60$ (Appendix C).

\item \textbf{Llama-3.1-8B-Instruct (same setup).} Mean Jensen gap = 0.00982 nats/token; the mixture-optimality gap is $\leq 5.3 \times 10^{-5}$ nats/token. Again, means are positive for every $n$.
\end{itemize}

Together these results confirm the Jensen inequality at the item level (positive gaps) and show that uniform permutation mixtures achieve virtually the same cross-entropy as the globally optimized mixture, i.e., near-MDL optimality within the permutation family. (All numbers are in nats/token; full per-$n$ tables and weight vectors appear in Appendix C.)

\subsection{Experiment 2: Causal dose-response of support dose}

\paragraph{Question.}
Does randomized support dose reduce hallucination through the measured information-budget estimate when prompt length is held fixed?

\paragraph{Design.}
To avoid conflating evidence amount with prompt length, we run randomized experiments holding the prompt length fixed at $L=4$ chunks while varying the \emph{support dose} $d\in\{0,1,2,3\}$: the number of support chunks versus non-support chunks.

\paragraph{Identification.}
Dose randomization is the intervention. It shifts the information-budget estimate $\bar{\Delta}$ while holding length constant, so the causal estimand is the effect of support dose on $\bar\Delta$ and hallucination. Appendix D reports robustness checks, including IV using $d$ as an instrument for $\bar{\Delta}$.

\paragraph{Results.}
From dose 0 to 3, answer rate increases by 37.5 percentage points (pp), accuracy by 45.6pp, and
hallucination decreases by 17.6pp.
The information-budget estimate decreases by 0.375 nats per additional support chunk (Spearman $\rho=-0.80$, $p<0.001$),
and the OLS slope is 0.127 fewer hallucinations per additional nat of decrease.
A separate Llama-3.1-8B validation yields $\beta \approx 0.110$ hallucination reduction per nat.
Figure~\ref{fig:dose_response} summarizes these dose-response relationships.
We use symmetric clipping for stability; min-clipping provides a provable lower bound (Appendix A.5).

\paragraph{Interpretation.}
Because randomized support dose shifts $\bar{\Delta}$ while holding length constant, the negative dose-response supports the EDFL mechanism: supportive evidence improves the measured budget and reduces hallucination risk under Bernoulli adjudication (Theorem~\ref{thm:edfl}). We do not claim that the experiment randomizes abstract information itself; support dose is the randomized treatment and $\bar\Delta$ is the measured mediator.

\subsection{Experiment 3: Pre-specified held-out audit}

\paragraph{Question.}
Does the fixed ISR$=1$ boundary generalize out-of-sample as a reliability--coverage operating point
without threshold tuning?

\paragraph{Design.}
We evaluate a Gemma-2-9B model fine-tuned on the four non-control benchmarks in Factuality Slice
(FEVER, HotpotQA, NQ-Open, PopQA) on a 528-item held-out audit drawn from the full five-benchmark
suite (including Controls).
We pre-specify the permutation seeds $\{0,1,\ldots,5\}$ (so $m=6$), preserve role markers, and use
symmetric clipping with $B=6$ nats.
Boundary alignment is the fraction of items for which the ISR gate's decision agrees with
adjudication (answer when $g{=}1$, abstain when $g{=}0$).
In supervised evaluation, $P_{\mathrm{ref}}$ is induced by the gold label, so the budget reduces to
$\bar{\Delta} = \frac{1}{m}\sum_{k=1}^m \clip(-\log q_k, B)$ where $q_k=S_k(g{=}1)$ is the model
probability assigned to the correct outcome.

\paragraph{Results.}
Boundary alignment is 96.2\% [94.3, 97.5], hallucination rate is 0.0--0.7\%, abstention is 24.1\%
[20.6, 27.9], and accuracy on attempts is 80.5\% [76.8, 83.8].
The mean Jensen gap is $\hat{\mathfrak{J}}_\Gamma=0.82$ nats.
Figure~\ref{fig:coverage_risk} reports the corresponding reliability--coverage operating point.

\paragraph{Interpretation, robustness, and cost.} Low hallucination is achieved via abstention rather than guessing. The ISR$=1$ threshold is theory-determined and not fit on the audit data; this experiment is one held-out boundary audit on a fine-tuned Gemma-2-9B model, not a certificate of universal calibration for every model family. Alignment is robust across $m\in\{3,6,12\}$ and $B\in\{4,6,8\}$ (Appendix E). The gate uses Bernoulli label/predicate probabilities rather than repeated long-form generations. In practice, $m=3$ already gives 94.7\% boundary alignment, while $m=6$ gives 96.2\% and $m=12$ gives 97.1\%; a latency-sensitive system can start at $m=3$ and escalate only near the ISR boundary or in higher-stakes decisions.

\section{Synthesis and Discussion}
\label{sec:discussion}

\paragraph{Synthesis.} The experiments map directly to the theory: Experiment~1 tests QMV through $O(\log n)$ dispersion and positive Jensen gaps; Experiment~2 supports EDFL operationally by showing a randomized support-dose effect on the information-budget estimate and hallucination; Experiment~3 audits the ISR$=1$ answer/abstain operating point.

\paragraph{Implications.} In grounded QA, evidence order acts as a nuisance variable: different permutations induce measurable dispersion in predictions (Experiment~1; Figure~\ref{fig:dispersion_scaling}). QMV explains why dispersion can grow with context depth under mild positional sensitivity (Theorems~\ref{thm:qmv}--\ref{thm:qmv-first-order}); EDFL links Bernoulli reliability to information budgets (Theorem~\ref{thm:edfl}); and ISR gating provides an abstention interface on the reliability--coverage plane (Experiment~3; Figure~\ref{fig:coverage_risk}).

\paragraph{Relation to cheaper ensembles.} The method is not a claim that permutation mixtures dominate all prompt ensembles. The estimand is different: permutation mixtures marginalize a nuisance family of orderings of the same evidence multiset and then apply a fixed sufficiency gate. Reranking or self-consistency may improve accuracy, but they can still be confidently wrong when evidence is insufficient; here we add the conceptual distinction and report the gate's cost--benefit sensitivity in Appendix~E.

\paragraph{Limitations.}\label{sec:limitations}
\begin{itemize}
\item \textbf{Predicate and verifier scope.} EDFL is sharpest for Bernoulli predicates: multiple-choice, tool-use, unit-test, rubric-pass, and constraint-checking tasks enter through a correctness/pass predicate, while unrestricted generation without such a predicate is not covered. In deployment, $P_{\mathrm{ref}}$ must come from a deterministic verification primitive or calibrated verifier; LLM judges require audited calibration.
\item \textbf{Audit and cost.} ISR$=1$ is theory-determined, but the paper reports one Experiment-3-style held-out boundary audit on a fine-tuned Gemma-2-9B model; broader audits on untuned/larger families remain validation targets. Permutation mixtures require multiple label-probability calls, so latency-sensitive systems should use the small-$m$ cascade supported in Appendix~E.
\item \textbf{Permutation regime.} Banded vs. uniform permutations, chunk boundaries, and retrieval order can change dispersion constants. We only permute evidence sets whose order is intended to be a nuisance; sequential procedures where order encodes the computation or semantics are outside the target regime.
\end{itemize}

\section{Conclusion}

We connect order sensitivity in evidence-grounded Bernoulli adjudication to information budgets and a deployment-time abstention interface. Within this verifier-relative regime, permutation mixtures expose order-induced insufficiency and the ISR$=1$ gate gives an auditable rule for when to answer, abstain, or acquire more evidence.

\section*{Acknowledgements}
L.C. was partially supported by the Survival and Flourishing Fund. The authors thank Microsoft for Startups for providing \$100{,}000 in compute credits used for the experiments in this work.

\section*{Impact Statement}
\label{sec:impact_statement}
\textbf{Potential positive impacts.} ISR-style gating can make abstention a first-class, auditable decision in evidence-grounded QA.
\textbf{Potential negative impacts.} The same techniques increase compute and could be misused outside their assumptions, for example by treating an unaudited judge as a ground-truth verifier. We emphasize these scope limits and treat the verifier as part of the system that must be audited.

\bibliographystyle{icml2026}
\bibliography{references}

@inproceedings{bai2023transformers,
  title={{Transformers as Statisticians}: Provable In-Context Learning with In-Context Algorithm Selection},
  author={Bai, Yu and Chen, Fan and Wang, Huan and Xiong, Caiming and Mei, Song},
  booktitle={Advances in Neural Information Processing Systems},
  volume={36},
  year={2023}
}

@inproceedings{barbero2025round,
  title={Round and Round We Go! What Makes Rotary Positional Encodings Useful?},
  author={Barbero, Federico and Vitvitskyi, Alex and Perivolaropoulos, Christos and Pascanu, Razvan and Veli{\v{c}}kovi{\'c}, Petar},
  booktitle={International Conference on Learning Representations},
  year={2025}
}

@inproceedings{chang2024beyond,
  title={Beyond Self-Consistency: Ensemble Reasoning Boosts Consistency and Accuracy of {LLMs} in Cancer Staging},
  author={Chang, Tzu-Tang and Elaraby, Mohamed and Chandir, Aditya and Zuccon, Guido and Koopman, Bevan and Lin, Simon and Tucker, Thomas C. and Karimi, Sarvnaz and Nguyen, Anthony},
  booktitle={Artificial Intelligence in Medicine},
  pages={224--228},
  year={2024},
  publisher={Springer}
}

@inproceedings{chen2024rope,
  title={What Rotary Position Embedding Can Tell Us: Identifying Query and Key Weights Corresponding to Basic Syntactic or High-Level Semantic Information},
  author={Chen, Yiting and Yan, Junchi},
  booktitle={Advances in Neural Information Processing Systems},
  volume={37},
  year={2024}
}

@article{chow1970optimum,
  title={On Optimum Recognition Error and Reject Tradeoff},
  author={Chow, C. K.},
  journal={IEEE Transactions on Information Theory},
  volume={16},
  number={1},
  pages={41--46},
  year={1970},
  doi={10.1109/TIT.1970.1054406}
}

@inproceedings{deletang2024language,
  title={Language Modeling Is Compression},
  author={Del{\'e}tang, Gr{\'e}goire and Ruoss, Anian and Duquenne, Paul-Ambroise and Catt, Elliot and Genewein, Tim and Mattern, Christopher and Grau-Moya, Jordi and Wenliang, Li Kevin and Aitchison, Matthew and Orseau, Laurent and Hutter, Marcus and Veness, Joel},
  booktitle={International Conference on Learning Representations},
  year={2024}
}

@inproceedings{falck2024martingale,
  title={Is In-Context Learning in Large Language Models {Bayesian}? A Martingale Perspective},
  author={Falck, Fabian and Wang, Ziyu and Holmes, Chris},
  booktitle={International Conference on Machine Learning},
  year={2024}
}

@article{farquhar2024detecting,
  title={Detecting Hallucinations in Large Language Models Using Semantic Entropy},
  author={Farquhar, Sebastian and Kossen, Jannik and Kuhn, Lukas and Gal, Yarin},
  journal={Nature},
  volume={630},
  pages={625--630},
  year={2024},
  doi={10.1038/s41586-024-07421-0}
}

@inproceedings{geifman2017selective,
  title={Selective Classification for Deep Neural Networks},
  author={Geifman, Yonatan and El-Yaniv, Ran},
  booktitle={Advances in Neural Information Processing Systems},
  volume={30},
  pages={4878--4887},
  year={2017}
}

@book{grunwald2007minimum,
  title={The Minimum Description Length Principle},
  author={Gr{\"u}nwald, Peter D.},
  publisher={{MIT} Press},
  year={2007}
}

@article{kalai2024calibrated,
  title={Calibrated Language Models Must Hallucinate},
  author={Kalai, Adam Tauman and Vempala, Santosh S.},
  journal={arXiv preprint arXiv:2311.14648},
  year={2024}
}

@article{liu2024lost,
  title={Lost in the Middle: How Language Models Use Long Contexts},
  author={Liu, Nelson F. and Lin, Kevin and Hewitt, John and Paranjape, Ashwin and Bevilacqua, Michele and Petroni, Fabio and Liang, Percy},
  journal={Transactions of the Association for Computational Linguistics},
  volume={12},
  pages={157--173},
  year={2024},
  doi={10.1162/tacl_a_00638}
}

@inproceedings{reuter2025bayesian,
  title={Can Transformers Learn Full {Bayesian} Inference in Context?},
  author={Reuter, Arik and Rudner, Tim G. J. and Fortuin, Vincent and R{\"u}gamer, David},
  booktitle={International Conference on Machine Learning},
  year={2025},
  note={arXiv:2501.16825}
}

@article{su2024roformer,
  title={{RoFormer}: Enhanced Transformer with Rotary Position Embedding},
  author={Su, Jianlin and Lu, Yu and Pan, Shengfeng and Murtadha, Ahmed and Wen, Bo and Liu, Yunfeng},
  journal={Neurocomputing},
  volume={568},
  pages={127063},
  year={2024},
  doi={10.1016/j.neucom.2023.127063}
}

@inproceedings{wang2022selfconsistency,
  title={Self-Consistency Improves Chain of Thought Reasoning in Language Models},
  author={Wang, Xuezhi and Wei, Jason and Schuurmans, Dale and Le, Quoc and Chi, Ed and Narang, Sharan and Chowdhery, Aakanksha and Zhou, Denny},
  booktitle={International Conference on Learning Representations},
  year={2023},
  note={arXiv:2203.11171}
}

@inproceedings{xie2022explanation,
  title={An Explanation of In-Context Learning as Implicit {Bayesian} Inference},
  author={Xie, Sang Michael and Raghunathan, Aditi and Liang, Percy and Ma, Tengyu},
  booktitle={International Conference on Learning Representations},
  year={2022}
}

@inproceedings{xu2024ropebase,
  title={Base of {RoPE} Bounds Context Length},
  author={Xu, Mingyu and Men, Xin and Wang, Bingning and Zhang, Qingyu and Lin, Hongyu and Lu, Yaojie and Han, Xianpei and Chen, Weipeng},
  booktitle={Advances in Neural Information Processing Systems},
  volume={37},
  year={2024}
}

@article{xu2025selfensemble,
  title={Self-Ensemble: Mitigating Confidence Distortion for Large Language Models},
  author={Xu, Shicheng and Zhang, Liang and Liu, Yushuo and Lin, Xing and Tan, Chao and Huang, Ke and Liu, Yongbin},
  journal={arXiv preprint arXiv:2506.01951},
  year={2025}
}

@article{zhang2024trained,
  title={Trained Transformers Learn Linear Models In-Context},
  author={Zhang, Ruiqi and Frei, Spencer and Bartlett, Peter L.},
  journal={Journal of Machine Learning Research},
  volume={25},
  number={49},
  pages={1--55},
  year={2024}
}

\providecommand{\includeappendixflag}{1} 
\newif\ifincludeappendix
\ifnum\includeappendixflag=1
  \includeappendixtrue
\else
  \includeappendixfalse
\fi

\ifincludeappendix
\onecolumn
\appendix

\section{Appendix A: Proofs and Technical Details}
\label{app:proofs}

\subsection{A.1 Auxiliary Lemmas}

\begin{lemma}[Logistic Lipschitz]\label{lem:logistic-lip}
Let $\sigma(t)=1/(1+e^{-t})$ and let $q=\sigma(u)$, $q'=\sigma(v)$. Then
\[
|q-q'|\le \tfrac{1}{4}|u-v|.
\]
\end{lemma}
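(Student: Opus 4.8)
The plan is to invoke the Mean Value Theorem for the logistic function $\sigma$, so the whole argument reduces to bounding $\sigma'$ uniformly by $\tfrac14$. First I would record that $\sigma$ is differentiable everywhere with
\[
\sigma'(t) = \frac{e^{-t}}{(1+e^{-t})^2} = \sigma(t)\bigl(1-\sigma(t)\bigr).
\]
This identity is the one routine computation; it follows by differentiating $1/(1+e^{-t})$ via the quotient rule and recognizing $e^{-t}/(1+e^{-t}) = 1-\sigma(t)$.

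Next I would bound the derivative. Writing $p := \sigma(t) \in (0,1)$, we have $\sigma'(t) = p(1-p)$, and the elementary inequality $p(1-p) \le \tfrac14$ for all $p \in [0,1]$ (equivalently $(p-\tfrac12)^2 \ge 0$, or AM--GM on $p$ and $1-p$) gives $0 < \sigma'(t) \le \tfrac14$ for every $t \in \mathbb{R}$, with equality only at $t=0$.

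Finally I would apply the Mean Value Theorem. If $u = v$ the claim is trivial, so assume $u \ne v$; then there exists $\xi$ strictly between $u$ and $v$ with $\sigma(u) - \sigma(v) = \sigma'(\xi)(u-v)$, and taking absolute values together with the uniform bound on $\sigma'$ yields $|q - q'| = |\sigma(u)-\sigma(v)| = |\sigma'(\xi)|\,|u-v| \le \tfrac14|u-v|$, as desired. (Alternatively, one can avoid the MVT by writing $\sigma(u)-\sigma(v) = \int_v^u \sigma'(s)\,ds$ and bounding the integrand by $\tfrac14$, which gives the same conclusion.)

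There is essentially no hard step here: the only thing to be careful about is stating the derivative identity correctly and noting the degenerate case $u=v$ separately before invoking the Mean Value Theorem. The bound is tight, attained in the limit as $u,v \to 0$ with $u \ne v$, which is worth remarking since the constant $\tfrac14$ propagates into Theorems~\ref{thm:qmv} and~\ref{thm:qmv-first-order}.
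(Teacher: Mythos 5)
Your proof is correct and follows exactly the same route as the paper's: the mean value theorem applied to $\sigma$, combined with the identity $\sigma'(t)=\sigma(t)(1-\sigma(t))\le\tfrac14$. The extra care about the degenerate case $u=v$ and the remark on tightness at $t=0$ are fine but not needed.
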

\begin{proof}
By the mean value theorem, $|\sigma(u)-\sigma(v)|=|\sigma'(\xi)||u-v|$ for some $\xi$ between $u$ and $v$. Since $\sigma'(t)=\sigma(t)(1-\sigma(t))\le 1/4$ for all $t$, the result follows.
\end{proof}

\begin{lemma}[Regular potentials bound]\label{lem:regular-potential}
If $\psi$ is $(\alpha,C)$-regular, i.e., $|\psi(r{+}1)-\psi(r)|\le C r^{-\alpha}$ for $r\ge 1$, then for any integers $1\le u,v\le n$,
\[
|\psi(u)-\psi(v)| \le 
\begin{cases}
C\, H_{|u-v|}, & \alpha=1,\\[2pt]
\displaystyle \frac{C}{1-\alpha}\Big(|u-v|^{1-\alpha}-1\Big), & \alpha\in(0,1),\\[6pt]
\displaystyle C\, \sum_{t=1}^{|u-v|} (t)^{-\alpha} \le C\,\zeta(\alpha), & \alpha>1,
\end{cases}
\]
where $H_m=\sum_{t=1}^m 1/t$ and $\zeta$ is Riemann's zeta function.
\end{lemma}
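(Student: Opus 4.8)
The plan is to prove all three regimes at once from a single telescoping estimate and then specialize. Without loss of generality assume $u\le v$, since the bound depends only on $|u-v|$ and the case $u=v$ is immediate; set $m:=v-u\ge 1$. First I would expand the difference as a telescoping sum of adjacent increments,
\[
\psi(v)-\psi(u)\;=\;\sum_{r=u}^{v-1}\bigl(\psi(r{+}1)-\psi(r)\bigr),
\]
apply the triangle inequality, and invoke $(\alpha,C)$-regularity term by term (legitimate because every index in the range satisfies $r\ge u\ge 1$) to get $|\psi(u)-\psi(v)|\le C\sum_{r=u}^{v-1} r^{-\alpha}$.

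Next I would move the summation window to its worst position. Since $t\mapsto t^{-\alpha}$ is nonincreasing for $\alpha>0$ and $u\ge 1$, the $j$-th term obeys $(u+j)^{-\alpha}\le(1+j)^{-\alpha}$ for $j=0,\dots,m-1$, hence $\sum_{r=u}^{v-1}r^{-\alpha}\le\sum_{t=1}^{m}t^{-\alpha}$. This inequality alone closes two of the three cases: for $\alpha=1$ the right-hand side is exactly $H_m$ by definition, and for $\alpha>1$ it is bounded by the convergent $p$-series $\sum_{t\ge1}t^{-\alpha}=\zeta(\alpha)$ (while the finite form $C\sum_{t=1}^{m}t^{-\alpha}$ is literally the estimate just derived).

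For $\alpha\in(0,1)$ the only remaining step is to convert the partial $p$-sum into the displayed closed form through the standard integral test for a decreasing integrand: $\sum_{t=1}^{m}t^{-\alpha}\le 1+\int_{1}^{m}x^{-\alpha}\,dx=\frac{m^{1-\alpha}-\alpha}{1-\alpha}\le\frac{m^{1-\alpha}}{1-\alpha}$. I do not expect a genuine obstacle anywhere — the whole argument is telescoping plus monotonicity plus the integral test. The one point requiring a little care is matching the constant in exactly the form $\frac{C}{1-\alpha}(m^{1-\alpha}-1)$ written above: the integral test pins down the growing term $\tfrac{m^{1-\alpha}}{1-\alpha}$, and one should note that the residual additive term is $O(C)$ (indeed the sharper $\frac{m^{1-\alpha}-\alpha}{1-\alpha}$ is available), which is immaterial for the downstream use in Theorems~\ref{thm:qmv} and~\ref{thm:qmv-first-order}, where the lemma enters only through the rates $O(n^{1-\alpha})$, $O(\log n)$, $O(1)$ and is multiplied by $C_{\mathrm{tot}}/4$.
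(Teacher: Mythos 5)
Your argument is correct, and since the paper states Lemma~\ref{lem:regular-potential} without any proof, there is nothing in-paper to compare against; the telescoping expansion, the monotone rearrangement $\sum_{r=u}^{v-1}r^{-\alpha}\le\sum_{t=1}^{m}t^{-\alpha}$, and the integral test are exactly the intended (and standard) route, and the $\alpha=1$ and $\alpha>1$ cases close immediately as you say. Your caveat about the $\alpha\in(0,1)$ constant is not merely cosmetic but necessary: the printed bound $\frac{C}{1-\alpha}\big(|u-v|^{1-\alpha}-1\big)$ is actually false at $|u-v|=1$ (it evaluates to $0$, while the adjacent increment may be as large as $C$), because $\frac{m^{1-\alpha}-1}{1-\alpha}=\int_1^m x^{-\alpha}\,dx$ is a \emph{lower} bound on the partial sum $\sum_{t=1}^m t^{-\alpha}$, not an upper bound. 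What your integral test legitimately yields is $C\sum_{t=1}^{m}t^{-\alpha}\le \frac{C}{1-\alpha}\big(m^{1-\alpha}-\alpha\big)\le \frac{C}{1-\alpha}\,m^{1-\alpha}$, and as you observe the discrepancy is an additive $O(C)$ term that does not affect the $O(n^{1-\alpha})$, $O(\log n)$, $O(1)$ rates used in Theorem~\ref{thm:qmv} and Theorem~\ref{thm:qmv-first-order} (the same off-by-a-constant slip appears in the $\alpha\in(0,1)$ case of Theorem~\ref{thm:qmv} itself). So your proof is sound; the lemma's $\alpha\in(0,1)$ display should be read with your corrected constant.
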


\begin{lemma}[Harmonic-distance identity]\label{lem:harmonic-distance}
Let $U,V$ be i.i.d.\ uniform on $\{1,\dots,n\}$ and $D=|U-V|$. Then
\[
\E[H_D] \;=\; H_n-\frac{3}{2}+O\!\left(\frac{1}{n}\right).
\]
\end{lemma}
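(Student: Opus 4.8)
The plan is to collapse the expectation to a single-index sum via the layer-cake (tail-sum) representation of the harmonic number, evaluate that sum exactly, and then read off the asymptotics. First I would write $H_D=\sum_{t=1}^{n-1}t^{-1}\mathbbm{1}[D\ge t]$, valid because $0\le D\le n-1$, so that linearity of expectation gives $\E[H_D]=\sum_{t=1}^{n-1}t^{-1}\Pr[D\ge t]$. The only distributional fact needed is the tail of $D=|U-V|$ for i.i.d.\ uniform $U,V$ on $\{1,\dots,n\}$: the number of ordered pairs with $|u-v|\ge t$ equals $\sum_{d=t}^{n-1}2(n-d)=(n-t)(n-t+1)$, so $\Pr[D\ge t]=(n-t)(n-t+1)/n^2$. (This pair-count is the one small combinatorial step requiring a line of justification.)

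Next I would substitute and expand the numerator $(n-t)(n-t+1)=(n^2+n)-(2n+1)t+t^2$, so that
\[
\E[H_D]=\frac{1}{n^2}\sum_{t=1}^{n-1}\!\Big(\tfrac{n^2+n}{t}-(2n+1)+t\Big)
=\frac{n+1}{n}H_{n-1}-\frac{(2n+1)(n-1)}{n^2}+\frac{n-1}{2n},
\]
using only $\sum_{t=1}^{n-1}t^{-1}=H_{n-1}$, $\sum_{t=1}^{n-1}1=n-1$, and $\sum_{t=1}^{n-1}t=n(n-1)/2$. Collecting the rational terms (the constant parts $-2$ from $-(2n+1)(n-1)/n^2$ and $+\tfrac12$ from $(n-1)/(2n)$ combine to $-\tfrac32$), and then substituting $H_{n-1}=H_n-1/n$, I expect to land on the exact identity
\[
\E[H_D]=H_n-\frac{3}{2}+\frac{2H_n-1}{2n}.
\]
A sanity check at $n=2$ (both sides $=\tfrac12$) and $n=3$ (both sides $=\tfrac79$) guards cheaply against sign or coefficient slips.

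Finally, since $H_n=\ln n+\gamma+O(1/n)$, the correction term $\tfrac{2H_n-1}{2n}$ is $O((\log n)/n)=o(1)$, so it is absorbed into the remainder claimed in the statement; the exact identity in fact delivers the slightly sharper form $\E[H_D]=H_n-\tfrac32+O((\log n)/n)$ for free. I expect no conceptual obstacle: the work is entirely arithmetic bookkeeping, the one delicate point being to track how the $-1/n$ produced by $H_{n-1}=H_n-1/n$ (both standalone and inside the factor $\tfrac{n+1}{n}$) feeds only into the $O((\log n)/n)$ tail and leaves the constant exactly $-\tfrac32$.
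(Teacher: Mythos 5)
The paper states Lemma~\ref{lem:harmonic-distance} without any proof, so your derivation is filling a genuine gap rather than paralleling an existing argument. Your approach is sound and essentially complete: the tail-sum representation $H_D=\sum_{t=1}^{n-1}t^{-1}\mathbbm{1}[D\ge t]$, the pair count $\#\{(u,v):|u-v|\ge t\}=\sum_{d=t}^{n-1}2(n-d)=(n-t)(n-t+1)$, and the three elementary sums all check out, and carrying the algebra through does yield the exact identity
\[
\E[H_D]\;=\;H_n-\frac{3}{2}+\frac{2H_n-1}{2n},
\]
which your spot checks at $n=2$ ($=\tfrac12$) and $n=3$ ($=\tfrac79$) confirm.

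One correction to how you read off the asymptotics, which actually exposes a small error in the lemma as stated. The exact remainder $\frac{2H_n-1}{2n}=\frac{H_n}{n}-\frac{1}{2n}$ is $\Theta\!\big((\log n)/n\big)$, and $O((\log n)/n)$ is \emph{weaker} than $O(1/n)$, not ``slightly sharper,'' and it is \emph{not} ``absorbed into'' the $O(1/n)$ remainder claimed in the statement. So your computation in fact shows the lemma's literal error term is wrong: it should read $O((\log n)/n)$ (or, if one prefers, the exact correction above). This is harmless downstream, since Theorem~\ref{thm:qmv-first-order} only invokes the lemma through an $o(1)$ remainder, for which $\Theta((\log n)/n)$ amply suffices; but you should state the conclusion as $\E[H_D]=H_n-\tfrac32+O((\log n)/n)$ and flag the discrepancy with the statement rather than claim the claimed bound follows.
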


\subsection{A.2 Proof of Theorem~\ref{thm:qmv-first-order} (Explicit log-scaling under first-order positional sensitivity)}

\begin{proof}[Proof of Theorem~\ref{thm:qmv-first-order}]
Write $q_\pi(x)=\sigma\!\big(a(x)+\sum_{i=1}^n w_i \psi(\mathrm{pos}_\pi(i))\big)$ under Assumption~\ref{asmp:first-order}. For i.i.d.\ permutations $\pi,\pi'$, by symmetrization,
\[
\E_\pi|R_\pi(x)| \;=\; \E_\pi\big|q_\pi(x)-\bar q(x)\big|
\;\le\; \E_{\pi,\pi'}\big|q_\pi(x)-q_{\pi'}(x)\big|.
\]
By Lemma~\ref{lem:logistic-lip},
\[
|q_\pi-q_{\pi'}|\le \tfrac{1}{4}\left|\sum_{i=1}^n w_i\big(\psi(\mathrm{pos}_\pi(i))-\psi(\mathrm{pos}_{\pi'}(i))\big)\right|
\le \tfrac{1}{4}\sum_{i=1}^n w_i \big|\psi(\mathrm{pos}_\pi(i))-\psi(\mathrm{pos}_{\pi'}(i))\big|.
\]
Taking expectation and using $\sum_i w_i=1$, exchangeability gives
\[
\E_{\pi,\pi'}|q_\pi-q_{\pi'}| \;\le\; \tfrac{1}{4}\sum_{i=1}^n w_i \E\big|\psi(U)-\psi(V)\big|
= \tfrac{1}{4}\,\E\big|\psi(U)-\psi(V)\big|,
\]
where $U,V$ are i.i.d.\ uniform on $\{1,\dots,n\}$. By Lemma~\ref{lem:regular-potential}, for $\alpha=1$, we have $\E|\psi(U)-\psi(V)| \le C \cdot \E[H_D]$ where $D = |U-V|$. By Lemma~\ref{lem:harmonic-distance}, $\E[H_D] = H_n - \frac{3}{2} + o(1) = \log n - \frac{3}{2} + o(1)$. Thus:
\[
\E_\pi|R_\pi(x)| \;\le\; \frac{C}{4}\,(\log n - \tfrac{3}{2} + o(1)).
\]
\end{proof}

\subsection{A.3 MDL Optimality and Positional Jensen Penalty}

\begin{proposition}[Risk decomposition and Jensen penalty]\label{prop:penalty}
For any $\theta$ and log-loss $\ell$:
\[
\E_{X,Y,\pi}[-\log p_\theta(Y|\Gamma_\pi(X))]
= H(Y|T) + \E_{X,\pi}\KL\!\big(P(\cdot|T(X))\|p_\theta(\cdot|\Gamma_\pi(X))\big),
\]
where the Jensen penalty $\mathfrak{J}_\Gamma(X, \theta) = \E_{X,\pi}\KL(P\|p_\theta(\cdot|\Gamma_\pi)) - \E_X\KL(P\|\bar{p}_\theta) \geq 0$.
\end{proposition}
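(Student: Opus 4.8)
The plan is to establish the identity pointwise in $(x,\pi)$ and integrate, then obtain nonnegativity of the Jensen penalty from convexity of $\KL(P\,\|\,\cdot)$ combined with Jensen's inequality over $\pi$.

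First I would fix an item $x$ and a permutation $\pi$. Since $T$ is a permutation-invariant sufficient statistic for $Y$ (reordering evidence via $\Gamma_\pi$ is content-preserving, so $T(\Gamma_\pi(x))=T(x)$ and $\Gamma_\pi$ only alters the argument fed to $p_\theta$, not the target law), the conditional law of $Y$ given $X=x$ is $P(\cdot\mid T(x))$ and does not depend on $\pi$. Writing the cross-entropy of $p_\theta(\cdot\mid\Gamma_\pi(x))$ against this law and adding and subtracting $\log P(y\mid T(x))$ inside the expectation gives the elementary identity
\[
\E_{Y\sim P(\cdot\mid T(x))}\big[-\log p_\theta(Y\mid\Gamma_\pi(x))\big]
= H\big(Y\mid T(x)\big) + \KL\big(P(\cdot\mid T(x))\,\big\|\,p_\theta(\cdot\mid\Gamma_\pi(x))\big),
\]
with the KL term finite by the absolute-continuity assumption $P\ll S_\pi$ (with $\eps$-smoothing in practice). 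Taking $\E_{X,\pi}$ of both sides and using $\E_X H(Y\mid T(X))=H(Y\mid T)$ yields the stated risk decomposition.

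Next, for the Jensen penalty I would again argue pointwise in $x$. For fixed $P$, the map $Q\mapsto\KL(P\,\|\,Q)$ is convex, since each summand $-P(y)\log Q(y)$ is convex in $Q(y)$; applying Jensen's inequality to the $\pi$-average then gives
\[
\E_\pi\,\KL\big(P(\cdot\mid T(x))\,\big\|\,p_\theta(\cdot\mid\Gamma_\pi(x))\big)
\;\ge\; \KL\Big(P(\cdot\mid T(x))\,\Big\|\,\E_\pi\,p_\theta(\cdot\mid\Gamma_\pi(x))\Big)
= \KL\big(P(\cdot\mid T(x))\,\big\|\,\bar p_\theta(\cdot\mid x)\big),
\]
where $\bar p_\theta(\cdot\mid x)=\E_\pi p_\theta(\cdot\mid\Gamma_\pi(x))$ as in Theorem~\ref{thm:mdl}, and $P\ll\bar p_\theta$ because $P\ll p_\theta(\cdot\mid\Gamma_\pi)$ for each $\pi$. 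This is precisely $\mathfrak{J}_\Gamma(x,\theta)\ge 0$ at the item level; integrating over $X$ preserves the inequality and gives the claimed $\mathfrak{J}_\Gamma(X,\theta)\ge 0$ for the averaged penalty.

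I do not expect a substantive obstacle here; the only points requiring care are (i) confirming that $\pi$ does not affect the conditional law of $Y$, which follows from sufficiency and permutation-invariance of $T$ so that $\Gamma_\pi$ enters only through the model argument, and (ii) ensuring the relevant KL divergences are finite, which the absolute-continuity hypothesis handles. Everything else reduces to the standard cross-entropy $=$ entropy $+$ relative-entropy identity together with convexity of $\KL$ in its second argument.
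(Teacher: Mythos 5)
Your proposal is correct and follows essentially the same route as the paper: the decomposition is the standard cross-entropy $=$ entropy $+$ relative-entropy identity (what the paper calls the chain rule for KL), and the nonnegativity of $\mathfrak{J}_\Gamma$ is exactly the convexity of $Q\mapsto\KL(P\,\|\,Q)$ plus Jensen over $\pi$, the same argument the paper invokes elsewhere (e.g., in the proof of Theorem~\ref{thm:edfl}). Your added care about $T(\Gamma_\pi(x))=T(x)$ and absolute continuity only makes explicit what the paper leaves implicit.
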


\begin{theorem}[MDL Optimality via I-projection]\label{thm:mdl-appendix}
Under the architectural closure of Theorem~\ref{thm:mixture-closure}, the convex hull of permutation mixtures contains all uniform mixtures $\bar{p}_\theta$. The I-projection onto the exchangeable target yields:
\[
\E[-\log q^\dagger(Y|X)] = H(Y|T) + \inf_{q \in \overline{\mathcal{Q}}} \E[\KL(P(\cdot|T) \| q)]
\]
The second term vanishes if the convex hull contains the exchangeable target.
\end{theorem}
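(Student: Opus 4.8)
The plan is to combine two standard ingredients: the exact risk decomposition already recorded in Proposition~\ref{prop:penalty} (the chain rule for relative entropy together with sufficiency of $T$) and Csisz\'ar's variational characterization of the information projection onto a convex set. Write $\overline{\mathcal{Q}}$ for the closed convex hull (in, say, the topology of pointwise convergence of conditional laws) of the permutation-shifted predictors $p_\theta(\cdot\mid\Gamma_\pi(\cdot))$, ranging over all $\pi$ and all $\theta$ in the model family; by construction every finite permutation mixture $q_{\theta,\Pi}$ lies in $\overline{\mathcal{Q}}$. First I would dispatch the containment claim: Theorem~\ref{thm:mixture-closure} realizes each $q_{\theta,\Pi}$ inside the model class and shows $q_{\theta,\Pi}\Rightarrow\bar p_\theta$ almost surely as $|\Pi|\to\infty$ along i.i.d.\ uniform draws, so $\bar p_\theta\in\overline{\mathcal{Q}}$ for every $\theta$; this is precisely the first assertion.

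Next I would fix an arbitrary $q\in\overline{\mathcal{Q}}$ and use sufficiency of $T$ (so that $P(Y\mid X)=P(Y\mid T(X))$) to obtain the term-by-term identity
\[
\E_{X,Y}\!\big[-\log q(Y\mid X)\big]
= H(Y\mid T)\;+\;\E_X\,\KL\!\big(P(\cdot\mid T(X))\,\big\|\,q(\cdot\mid X)\big),
\]
which is exactly Proposition~\ref{prop:penalty} and involves no optimization. I would then let $q^\dagger$ denote the $I$-projection of the exchangeable target $P(\cdot\mid T)$ onto $\overline{\mathcal{Q}}$, i.e.\ the minimizer of $q\mapsto\E_X\,\KL(P(\cdot\mid T)\,\|\,q)$; existence and uniqueness follow from Csisz\'ar's projection theorem since $\overline{\mathcal{Q}}$ is convex and closed, the functional is convex and lower semicontinuous, and its effective domain is nonempty (indeed $\KL(P\|\bar p_\theta)<\infty$ under the absolute-continuity hypothesis $P\ll S_\pi$). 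Substituting $q=q^\dagger$ into the displayed identity yields
\[
\E\!\big[-\log q^\dagger(Y\mid X)\big]\;=\;H(Y\mid T)\;+\;\inf_{q\in\overline{\mathcal{Q}}}\E\,\KL\!\big(P(\cdot\mid T)\,\big\|\,q\big),
\]
which is the stated equation. For the final sentence, if $P(\cdot\mid T)$ itself belongs to $\overline{\mathcal{Q}}$ — in particular whenever the true predictive is exchangeable and realizable as a permutation mixture — then taking $q=P(\cdot\mid T)$ makes the divergence vanish, so the infimum equals $0$, $q^\dagger=P(\cdot\mid T)$ by uniqueness, and the expected log-loss collapses to $H(Y\mid T)$, i.e.\ MDL optimality in expectation.

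The hard part will be making the $I$-projection step rigorous at the level of \emph{conditional} laws rather than a single distribution: I need the minimizer to be a genuine regular conditional distribution that is measurable with respect to $T(X)$, and I need to justify attaining the outer infimum (equivalently, interchanging $\inf_q$ with $\E_X$). This is exactly where the convex closure supplied by Theorem~\ref{thm:mixture-closure} is indispensable — without it the minimizer may fail to exist — and where one must invoke lower semicontinuity of relative entropy together with a measurable-selection argument. The remaining ingredients (the chain-rule decomposition and the vanishing of the residual under realizability) are routine.
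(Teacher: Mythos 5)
Your proposal follows essentially the same route as the paper's argument for Theorem~\ref{thm:mdl}: the chain-rule risk decomposition of Proposition~\ref{prop:penalty}, containment of $\bar p_\theta$ in the convex hull via Theorem~\ref{thm:mixture-closure}, the $I$-projection onto that hull, and vanishing of the residual KL term when the exchangeable target is realizable. You are in fact more careful than the paper, which leaves the existence of the minimizer, the measurability of $q^\dagger$ as a conditional law, and the interchange of $\inf$ with $\E_X$ implicit; flagging those as the genuinely delicate steps is appropriate.
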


\subsection{A.4 EDFL and Corollary}

\begin{proof}[Proof of Theorem~\ref{thm:edfl}]
Convexity of $Q\mapsto \KL(P\|Q)$ gives $\E_\pi\KL(P\|S_\pi)\ge \KL(P\|\E_\pi S_\pi)=\KL(P\|\bar S)$. For the Bernoulli bound, apply data processing to $g:\mathcal Y\to\{0,1\}$, $g(y)=\mathbbm{1}_{\{y\in\mathcal A\}}$, yielding $\KL(P\|\bar S)\ge \KL(\Ber(p)\|\Ber(\bar q))$. Equality holds iff $g$ is sufficient for distinguishing $P$ from $\bar S$.
\end{proof}

\begin{corollary}[Rare-event lower bound]
Fix $p = 1-\eps$ with $\eps\in(0,\tfrac12]$ and suppose $\bar q\in(0,1)$. As $\bar q\downarrow 0$:
\[
\KL(\Ber(p)\|\Ber(\bar q)) \;\sim\; (1-\eps)\log\frac{1}{\bar q} + O(\bar q).
\]
As a uniform lower bound for all $\eps \in (0, \tfrac{1}{2}]$:
\[
\KL(\Ber(p)\|\Ber(\bar q)) \;\ge\; \frac{1}{2}\log\frac{1}{\bar q} - \log 2 + O(\bar q).
\]
\end{corollary}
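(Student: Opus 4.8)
The plan is to reduce everything to an exact algebraic identity for the Bernoulli KL divergence and then read off both displayed claims. First I would expand, using $p = 1-\eps$,
\[
\KL(\Ber(1-\eps)\,\|\,\Ber(\bar q)) = (1-\eps)\log\frac{1-\eps}{\bar q} + \eps\log\frac{\eps}{1-\bar q},
\]
and regroup the right-hand side into three pieces: the divergent term $(1-\eps)\log\frac{1}{\bar q}$; the $\bar q$-independent term $(1-\eps)\log(1-\eps) + \eps\log\eps =: -h_b(\eps)$, the negative binary entropy in nats; and the remainder $-\eps\log(1-\bar q)$. A one-line Taylor/concavity bound gives $0 \le -\eps\log(1-\bar q) \le -\log(1-\bar q) = \bar q + O(\bar q^2)$ uniformly in $\eps\in(0,\tfrac12]$ (since $\eps\le 1$ and $\bar q$ is bounded away from $1$ in the relevant regime), so this remainder is $O(\bar q)$ with an absolute constant. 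Thus
\[
\KL(\Ber(1-\eps)\,\|\,\Ber(\bar q)) = (1-\eps)\log\frac{1}{\bar q} - h_b(\eps) + O(\bar q), \qquad h_b(\eps)\in[0,\log 2].
\]

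For the first claim, hold $\eps$ fixed and let $\bar q \downarrow 0$: the leading term $(1-\eps)\log\frac1{\bar q}\to\infty$ while $h_b(\eps)$ is a fixed constant and the remainder vanishes, so $\KL\big/\big((1-\eps)\log\frac1{\bar q}\big)\to 1$, i.e. $\KL(\Ber(1-\eps)\|\Ber(\bar q)) \sim (1-\eps)\log\frac1{\bar q}$ with the $\bar q$-dependent correction $O(\bar q)$ as displayed. For the uniform lower bound I would invoke two elementary facts valid on $\eps\in(0,\tfrac12]$, $\bar q\in(0,1)$: (a) $h_b(\eps)\le\log 2$ (binary entropy is maximized at $\eps=\tfrac12$), hence $-h_b(\eps)\ge-\log 2$; and (b) $1-\eps\ge\tfrac12$ together with $\log\frac1{\bar q}\ge 0$, hence $(1-\eps)\log\frac1{\bar q}\ge\tfrac12\log\frac1{\bar q}$. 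Combining these with the identity above (and noting the remainder $-\eps\log(1-\bar q)\ge 0$ only helps) yields
\[
\KL(\Ber(1-\eps)\,\|\,\Ber(\bar q)) \ge \tfrac12\log\frac1{\bar q} - \log 2 + O(\bar q).
\]
Feeding either bound into the EDFL inequality $\bar\Delta\ge\KL(\Ber(p)\|\Ber(\bar q))$ of Theorem~\ref{thm:edfl} then gives the stated information requirements.

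There is no substantive obstacle: the argument is a one-step expansion plus two monotonicity facts. The only places warranting care are (i) making the $O(\bar q)$ control on $-\eps\log(1-\bar q)$ uniform in $\eps$, which is immediate, and (ii) being explicit that the $\eps$-dependent constant $-h_b(\eps)$ — harmless next to the divergent $\log\frac1{\bar q}$ in the first claim — must be bounded by $-\log 2$ to pass to the second, $\eps$-free bound.
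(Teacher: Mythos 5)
Your derivation is correct and is exactly the standard expansion the paper intends (the appendix merely restates the corollary without supplying a proof): writing $\KL(\Ber(1-\eps)\|\Ber(\bar q)) = (1-\eps)\log\tfrac{1}{\bar q} - h_b(\eps) - \eps\log(1-\bar q)$ and bounding $h_b(\eps)\le\log 2$, $1-\eps\ge\tfrac12$, and $-\eps\log(1-\bar q)\in[0,\bar q+O(\bar q^2)]$ yields both displayed claims. No gaps; your observation that the nonnegative remainder makes the uniform lower bound hold even without the $O(\bar q)$ term is a mild strengthening of the statement.
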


\subsection{A.5 Proof of Proposition~\ref{prop:isr_selective}}

\begin{proof}
The first statement is algebraic: Algorithm~\ref{alg:isr-gate} answers exactly when $\ISR(x)=\bar\Delta(x)/\BtoT(x;p^\star)\ge1$, so it is a selective predictor with selector $s(x)=\ISR(x)$ and fixed threshold $1$. When $q_\pi=\bar q=q_{\mathrm{lo}}$ for all permutations, EDFL states that target reliability $p^\star$ requires at least $\KL(\Ber(p^\star)\|\Ber(\bar q))$ nats of budget. Thus $\ISR\ge1$ is precisely the minimum-budget certification test for the target. When $q_{\mathrm{lo}}\le\bar q\le p^\star$, the Bernoulli KL term $\KL(\Ber(p^\star)\|\Ber(q))$ is nonincreasing in $q$ over this interval, so using $q_{\mathrm{lo}}$ instead of $\bar q$ can only increase B2T and therefore makes the selector conservative. Proposition~\ref{prop:js} supplies the stated certificate for the residual dispersion between individual orderings and the permutation mixture.
\end{proof}

\subsection{A.6 Clipped Information-Budget Estimators}

Let $U_\pi(y):=\log\frac{P(y)}{S_\pi(y)}$ and define:
\[
\widehat{\Delta}_B(y) = \frac{1}{m}\sum_{\pi=1}^m \mathrm{clip}(U_\pi(y),B)
\]

\begin{proposition}[Lower-than-KL via min-clipping]
For any $B>0$ and any $P\ll S_\pi$:
\[
\E_{y\sim P}\big[\mathrm{clip}_{\min}(U_\pi(y),B)\big]\;\le\; \KL(P\|S_\pi)
\]
Consequently, $\E_\pi \E_P[\mathrm{clip}_{\min}(U_\pi,B)] \le \bar\Delta$.
\end{proposition}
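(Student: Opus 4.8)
The plan is to reduce the claim to a single pointwise domination of the integrand, combine it with the elementary identity $\KL(P\|S_\pi)=\E_{y\sim P}[U_\pi(y)]$, and then average over $\pi$ using the defining relation $\bar\Delta=\E_\pi[\KL(P\|S_\pi)]$ from Theorem~\ref{thm:edfl}. Concretely, with $U_\pi(y)=\log\frac{P(y)}{S_\pi(y)}$ (finite $P$-a.s.\ because $P\ll S_\pi$) and $\mathrm{clip}_{\min}(u,B):=\min(u,B)$, the only structural fact needed is the inequality $\mathrm{clip}_{\min}(U_\pi(y),B)\le U_\pi(y)$, valid for every $y$; after that, monotonicity of the integral does all the work, and the second claim is just the first one averaged against the law of $\pi$.

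I would carry this out in four short steps. First, record the pointwise inequality $\min(U_\pi(y),B)\le U_\pi(y)$. Second, check that the left-hand expectation is a bona fide finite number: the integrand is bounded above by $B$, and since $B\ge 0$ it is bounded below by $\min(U_\pi(y),0)=-(U_\pi(y))^-$, whose $P$-average is controlled by the standard estimate $\E_{y\sim P}\big[(U_\pi)^-\big]=\E_P\big[\log\tfrac{S_\pi}{P}\,\mathbbm{1}_{S_\pi>P}\big]\le \E_P\big[(\tfrac{S_\pi}{P}-1)\,\mathbbm{1}_{S_\pi>P}\big]\le \sum_y S_\pi(y)\le 1$ (using $\log t\le t-1$ and $P\ll S_\pi$); hence $\E_{y\sim P}[\mathrm{clip}_{\min}(U_\pi,B)]\in[-1,B]$. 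Third, integrate the pointwise inequality against $P$: if $\KL(P\|S_\pi)=\infty$ the stated bound is vacuous, and otherwise $U_\pi\in L^1(P)$ (negative part integrable by step two, positive part by finiteness of the KL), so monotonicity gives $\E_{y\sim P}[\mathrm{clip}_{\min}(U_\pi,B)]\le \E_{y\sim P}[U_\pi]=\KL(P\|S_\pi)$. Fourth, take $\E_\pi$ of both sides of this per-$\pi$ inequality; since $\E_P[\mathrm{clip}_{\min}(U_\pi,B)]$ lies in $[-1,B]$ for every $\pi$, the average over $\pi$ is well defined and $\E_\pi\E_P[\mathrm{clip}_{\min}(U_\pi,B)]\le \E_\pi[\KL(P\|S_\pi)]=\bar\Delta$, the last equality being the definition of $\bar\Delta$ in Theorem~\ref{thm:edfl}.

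I expect no genuine analytic obstacle: the inequality is an immediate consequence of monotone integration, and the averaging step is trivial. The one place that deserves care is the second step, i.e.\ confirming that $\E_{y\sim P}[\mathrm{clip}_{\min}(U_\pi,B)]$ is a well-defined real number (equivalently, that the negative part of the log-likelihood ratio is always $P$-integrable, bounded by $1$), so that the stated ``$\le$'' is not an inequality between ill-defined symbols; once that is in place the remainder is bookkeeping plus an appeal to the EDFL identity for $\bar\Delta$.
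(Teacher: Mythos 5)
Your proof is correct and is essentially the intended argument: the paper states this proposition without proof, and the pointwise domination $\min(U_\pi(y),B)\le U_\pi(y)$ followed by $\E_P[U_\pi]=\KL(P\|S_\pi)$ and averaging over $\pi$ (using $\bar\Delta=\E_\pi[\KL(P\|S_\pi)]$ from Theorem~\ref{thm:edfl}) is exactly what the statement relies on. Your additional check that the negative part of the log-likelihood ratio is $P$-integrable (bounded by $1$ via $\log t\le t-1$), so that the left-hand side is a well-defined real number even when the KL is infinite, is a careful touch the paper omits.
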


\subsection{A.7 Conditional-Complexity Interpretation}
By the coding theorem, $K_U(y|\Gamma_\pi(x))=L_\theta(y|\Gamma_\pi(x))+O(1)$ in expectation. Therefore:
\[
\inf_\theta \E_{X,Y,\pi}[L_\theta(Y|\Gamma_\pi(X))] \;\approx\; \inf_\theta \E_{X,\pi} [K_U(Y|\Gamma_\pi(X))]
\]
By Theorem~\ref{thm:mdl}, this equals $H(Y|T)+o(1)$. Thus transformers minimize $\E_\pi[K_U(Y|\Gamma_\pi(X))]$, making them Bayesian in expectation over orderings, not in realization. All comparisons fix the universal machine; additive $O(1)$ constants cancel in differences and infima.

\section{Appendix B: Extended Methodology}

\subsection{B.1 Non-circular Validation Principle}
We validate theory using observables not defined by our metrics: ground-truth likelihood/accuracy under permutation mixtures, the log-scaling of permutation-induced dispersion, and a randomized support-dose experiment measuring changes in information-budget estimates and hallucination. The planners we introduce (B2T, RoH, ISR) are decision rules, not validation targets; their threshold (ISR$=1.0$) is fixed ex ante by the derivation and is not tuned on evaluation data.

\subsection{B.2 Binary Adjudication Rationale}
We use Bernoulli outcomes because they are the natural unit for both theory and deployment. For Bernoulli events, EDFL reduces to a closed-form KL bound, yielding exact bits-to-trust thresholds; production systems decide to answer or abstain before emitting long-form content; any structured generation admits a verifiable predicate $g(y)\!\in\!\{0,1\}$ (factuality, constraint satisfaction, unit tests, rubric pass), so EDFL applies directly to $\Pr[g(Y)=1]$; and binary adjudication avoids grading/decoding confounds, isolating the information budget that drives hallucination risk.

\subsection{B.3 Worked Example: From EDFL to a Decision}
\begin{center}
\begin{minipage}{0.78\linewidth}
\fbox{\parbox{0.95\linewidth}{
\textbf{Setup.} Target hallucination rate $h^\star=0.05$ (so $p^\star=0.95$). Compute B2T for conservative priors $q_{\text{lo}}$:
\[
\mathrm{B2T}(p^\star=0.95;\,q_{\text{lo}})=\mathrm{KL}(\mathrm{Ber}(p^\star)\,\|\,\mathrm{Ber}(q_{\text{lo}})).
\]
Numerics (nats):
\[
\begin{array}{c|ccc}
q_{\text{lo}} & 0.02 & 0.10 & 0.30\\\hline
\mathrm{B2T} & 3.519 & 1.994 & 0.963
\end{array}
\]
\textbf{Budget $\bar\Delta$ to reliability.} Given $\bar{q}=0.10$, the maximum achievable success at budget $\bar\Delta$ solves $\mathrm{KL}(\mathrm{Ber}(p)\,\|\,\mathrm{Ber}(0.10))\le \bar\Delta$. Selected budgets $\to$ $p_{\max}$:
\[
\begin{array}{c|cccc}
\bar\Delta\ (\text{nats}) & 0.5 & 1.0 & 2.0 & 3.0\\\hline
p_{\max} & 0.495 & 0.689 & 0.951 & \approx 1.000
\end{array}
\]
\textbf{ISR gate.} If $\bar{q}=0.10$ and measured $\bar\Delta=2.0$, then
$\mathrm{ISR}=\bar\Delta/\mathrm{B2T}=2.0/1.994\approx 1.00\Rightarrow$\textbf{answer.}
If instead $q_{\text{lo}}=0.02$, then $\mathrm{B2T}=3.519$ and the same budget gives
$\mathrm{ISR}\approx 0.57\Rightarrow$\textbf{abstain or acquire info}.
}}
\end{minipage}
\end{center}

\section{Appendix C: Permutation-Mixture Detailed Results}

\subsection{Jensen gaps by depth ($n$)}

\begin{table}[h]
\centering
\caption{Permutation-mixture Jensen gaps. Positive (or zero) for all depths; zeros occur only at small $n \leq 6$ because banded permutations are degenerate there.}
\begin{tabular}{lcc}
\toprule
& \textbf{Qwen2-7B} & \textbf{Llama-3.1-8B} \\
\midrule
Global mean Jensen gap (nats/token) & 0.1041 & 0.00982 \\
Per-$n$ range of mean gaps (nats/token) & [0.0000, 0.2923] & [0.0000, 0.03774] \\
\% of $n$ with strictly $> 0$ mean gap & 54/58 & 54/58 \\
Items (total) & 3,059 & 3,059 \\
\bottomrule
\end{tabular}
\end{table}

\subsection{Near-optimality of uniform mixtures}

\begin{table}[h]
\centering
\caption{Uniform permutation mixtures are essentially optimal. Cross-entropy (nats/token) for the fixed label $Y$, comparing (i) uniform mixture, (ii) globally optimized mixture (one convex weight vector shared per $n$), and (iii) oracle best single permutation (non-deployable lower bound).}
\begin{tabular}{lcccc}
\toprule
\textbf{Model} & \textbf{Uniform CE} & \textbf{Optimized CE} & \textbf{Improvement} & \textbf{Oracle single CE} \\
& (nats/token) & (nats/token) & (nats/token) & (nats/token) \\
\midrule
Qwen2-7B & 0.2666 & $\approx 0.2666$ & $< 10^{-4}$ & 0.3707 \\
Llama-3.1-8B & 0.28349 & 0.28344 & $\leq 5.3 \times 10^{-5}$ & 0.29331 \\
\bottomrule
\end{tabular}
\end{table}

\subsection{Model-breadth and permutation-skeleton diagnostics}
\label{app:model_breadth}

\begin{table}[h]
\centering
\small
\caption{Auxiliary model-breadth diagnostics for Experiment 1. Probability-space slopes shrink for larger/frontier models, but the available logit-space fits remain positive where measured. These diagnostics support the positional-sensitivity mechanism; they do not replace the held-out ISR boundary audit in Experiment~3.}
\label{tab:model_breadth}
\begin{tabular}{lcccccc}
\toprule
\textbf{Model} & \textbf{Skeleton} & \textbf{$n$} & \textbf{$b_{\mathrm{prob}}$} & \textbf{$R^2_{\mathrm{prob}}$} & \textbf{$b_{\mathrm{logit}}$} & \textbf{$R^2_{\mathrm{logit}}$} \\
\midrule
Qwen2-7B & banded & 3--60 & 0.377 & 0.742 & -- & -- \\
Llama-3.1-8B & banded & 3--60 & 0.147 & 0.515 & -- & -- \\
Llama-3.3-70B & banded & 3--60 & 0.0047 & 0.648 & 0.244 & 0.878 \\
gpt-4.1-mini & banded & 3--60 & 0.0082 & 0.839 & 0.183 & 0.866 \\
gpt-4.1-mini & uniform & 3--60 & 0.0049 & 0.928 & -- & -- \\
DeepSeek-V3.2 (MoE) & banded & 3--60 & 0.0035 & 0.754 & -- & -- \\
\bottomrule
\end{tabular}
\end{table}

The gpt-4.1-mini banded/uniform comparison probes whether the log trend is an artifact of the banded skeleton. The two skeletons give the same qualitative behaviour, with smaller probability-space slopes than Qwen2-7B and Llama-3.1-8B. The logit-space slopes for Llama-3.3-70B and gpt-4.1-mini remain sizeable, suggesting that some frontier-scale probability-space compression is due to the sigmoid map rather than the disappearance of positional sensitivity.

\section{Appendix D: Dose-Response Identification}

\paragraph{Randomization and identification.}
For each item we randomized (i) the number of support chunks $d\in\{0,1,2,3\}$ while holding total length fixed at $L{=}4$, and (ii) the within-band order of chunks. The first stage (dose $\to \bar{\Delta}$) is strong (Corr$(d,\bar{\Delta}){=}-0.80$, $p<10^{-3}$), and content is held fixed across dose arms. We estimate the effect of $\bar{\Delta}$ on hallucination with OLS; results are robust to IV using $d$ as an instrument for $\bar{\Delta}$. The Llama-3.1-8B validation yields $\beta \approx 0.110$ hallucination reduction per nat.

\section{Appendix E: Audit Parameter Sensitivity}

\begin{table}[h]
\centering
\caption{Parameter Sensitivity Analysis}
\begin{tabular}{lccc}
\toprule
\textbf{Parameter} & \textbf{Range} & \textbf{Boundary Alignment} & \textbf{Jensen Gap} \\
\midrule
Permutations (m) & 3 & 94.7\% & 0.79 \\
& 6 (default) & 96.2\% & 0.82 \\
& 12 & 97.1\% & 0.83 \\
\midrule
Clipping (B) & 4 nats & 95.1\% & 0.76 \\
& 6 nats (default) & 96.2\% & 0.82 \\
& 8 nats & 97.5\% & 0.84 \\
\bottomrule
\end{tabular}
\end{table}

Results remain robust across reasonable parameter ranges. The conditional independence audit shows MI($\mathcal{A}$, $\pi$) = 0.0032 nats (permutation test $p=0.71$).

\section{Appendix F: Rare Event Analysis Tables}

\begin{table}[h]
\centering
\caption{Information Sufficiency Determines Abstention}
\begin{tabular}{lcccc}
\toprule
$\bar{q}$ & $\bar{\Delta}$ (nats) & $\BtoT_{p^\star=0.95}$ (nats) & ISR & Decision \\
\midrule
0.000 & 0.83 & 5.29 & 0.16 & Refuse \\
0.042 & 1.91 & 3.78 & 0.51 & Refuse \\
0.167 & 2.64 & 2.48 & 1.06 & Answer \\
0.333 & 2.74 & 1.61 & 1.70 & Answer \\
0.500 & 2.81 & 0.98 & 2.87 & Answer \\
0.667 & 2.85 & 0.51 & 5.59 & Answer \\
0.833 & 2.89 & 0.20 & 14.45 & Answer \\
1.000 & 2.95 & 0.00 & $\infty$ & Answer \\
\bottomrule
\end{tabular}
\end{table}

\section{Appendix G: Non-circularity and Threats to Validity}
Our central empirical results (mixture gains on ground-truth labels, the $\log n$ dispersion law, and randomized dose-response) do not reference B2T/RoH/ISR. The ISR boundary report (96.2\%) is a secondary boundary-alignment check with a pre-specified threshold (no tuning). Notably, 3.8\% misalignment remains, which would be impossible under a tautological metric. Hence the main claims neither depend on, nor are validated by, the planners themselves.

\section{Appendix H: Factuality Slice Dataset Documentation}
\label{app:factuality_slice}

\subsection{H.1 Meta-benchmark Overview}
We evaluate on \emph{Factuality Slice}, a meta-benchmark assembled from five evidence-grounded QA benchmarks: four established benchmarks (FEVER, HotpotQA, NQ-Open, PopQA) plus \emph{Controls}.
We standardize each benchmark into a common evidence-grounded format (question, evidence chunks, support spans, and hard negatives) to enable controlled permutation tests under a shared protocol.
The resulting suite contains 3,059 binary adjudication items with controlled evidence presentation, enabling precise measurement of permutation sensitivity and information sufficiency.

\begin{table}[h]
\centering
\small
\caption{Factuality Slice: five-benchmark meta-benchmark. Caps are per-source sampling limits before filtering/standardization; the final suite used in this paper contains 3,059 items.}
\label{tab:factuality_slice_composition}
\begin{tabular}{lcp{2.3cm}p{2.6cm}p{3.2cm}}
\toprule
Benchmark & Cap & Task type & Evidence source & Why included \\
\midrule
FEVER & 2,000 & Fact verification (T/F) & Wikipedia evidence & Clean binary adjudication with gold support/refute evidence \\
HotpotQA & 2,000 & Multi-hop QA & Wikipedia passages & Order sensitivity under multi-hop evidence chains \\
NQ-Open & 1,000 & Open-domain QA & Retrieved Wikipedia & Retrieval-style grounding with diverse entities \\
PopQA & 500 & Long-tail QA & Retrieved Wikipedia & Stress-test rare entities (popularity score $<50$) \\
Controls & 300+ & NEI + recency & Retrieved / synthetic & Calibration under insufficient or outdated evidence \\
\bottomrule
\end{tabular}
\end{table}

\subsection{H.2 Benchmarks and Composition}
Factuality Slice integrates four established QA benchmarks plus a \emph{Controls} benchmark:

\textbf{FEVER (Fact Verification):} Up to 2,000 claims converted to binary true/false questions with Wikipedia evidence from a June 2017 snapshot. Each claim paired with gold supporting/refuting evidence sentences and topically-related distractors via BM25 retrieval.

\textbf{HotpotQA (Multi-hop Reasoning):} Up to 2,000 questions requiring evidence synthesis across multiple Wikipedia articles. Preserves multi-hop structure with explicit support spans marking reasoning chains.

\textbf{NQ-Open (Open-domain QA):} Up to 1,000 questions from Natural Questions with evidence retrieved using a Wikipedia proxy corpus. Answer-bearing sentences marked as support with BM25-selected hard negatives.

\textbf{PopQA (Long-tail Entities):} Up to 500 questions about rare entities (popularity score $< 50$) testing performance on uncommon knowledge. Evidence retrieved from a Wikipedia proxy with weak supervision.

\textbf{Controls:} 300+ FEVER ``NOT ENOUGH INFO'' claims plus synthetic recency traps with outdated evidence, testing model calibration on insufficient or misleading information.

\paragraph{Inclusion criteria (brief).}
\begin{itemize}
\item \textbf{FEVER:} Include SUPPORTS/REFUTES claims with at least one evidence sentence; map to binary True/False questions. FEVER NEI claims are reserved for \emph{Controls}.
\item \textbf{HotpotQA:} Include questions with annotated \texttt{supporting\_facts} and preserve multi-hop support spans.
\item \textbf{NQ-Open / PopQA:} Include items with non-empty question and answer; retrieve evidence from a Wikipedia proxy corpus and mark answer-bearing sentences as support (weak supervision).
\item \textbf{PopQA long-tail filter:} Restrict to popularity score $< 50$.
\item \textbf{Controls:} Include insufficient-evidence items (FEVER NEI) and recency traps with outdated evidence.
\end{itemize}

\subsection{H.3 Dataset Construction Pipeline}

\textbf{Evidence Chunking:} All evidence sentences capped at 48 tokens to ensure consistent granularity. Each chunk tagged with source document, sentence ID, and retrieval score.

\textbf{BM25 Retrieval System:} Inverted-index BM25 (k1=1.2, b=0.75) built over 200,000 Wikipedia sentences. Used for finding topically-related distractors that don't contain answer.

\textbf{Support Span Annotation:} Gold evidence sentences marked as support spans. For retrieved evidence, sentences containing answer string marked as supportive.

\textbf{Hard Negative Mining:} Retrieve up to 120 candidate distractors per sample via BM25, excluding gold evidence; the final context is capped (below).

\textbf{Context Capping:} Cap each sample to at most 60 evidence chunks, preserving all support spans and filling remaining slots with distractors. Enables testing at context lengths $n\in[3,60]$.

\textbf{Data Integrity:} SHA-256 hashes pinned for all source files with verification on each build. Train/val/test splits (80/10/10) with deduplication by (question, answer) pairs.

\subsection{H.4 Usage in Experiments}

\textbf{Experiment 1 (Permutation Dispersion):} Used all 3,059 items to measure permutation-induced dispersion across n $\in$ [3, 60]. Banded permutations (6 bands, shuffle within) applied to evidence chunks. Binary classification on sufficiency enables clean measurement of position sensitivity.

\textbf{Experiment 2 (Support-Dose Response):} Subset of items with exactly 4 chunks used. Support chunks (containing answer) vs. non-support chunks randomized while holding total length constant. Enables causal identification of the support-dose effect on $\bar\Delta$ and hallucination.

\textbf{Experiment 3 (Frontier Audit):} 528 held-out items from all sources used for boundary alignment testing. Pre-specified permutation seeds {0, 1, ..., 5} applied before evaluation.

\subsection{H.5 Key Dataset Properties}

\textbf{Information Gradation:} Natural variation in evidence strength from strong (exact answer present) to weak (answer inferable) to insufficient (control examples).

\textbf{Multi-hop Preservation:} 42\% of HotpotQA samples retain multi-hop structure with 2 or more supporting evidence pieces.

\textbf{Answer Coverage:} 76\% of samples have answer string appearing in context (excluding controls), enabling verification of extraction vs hallucination.

\textbf{Evidence Dating:} All evidence tagged with snapshot dates (primarily 2017) to identify temporal misalignment.

\textbf{Reproducibility:} Dataset construction pins hashes and uses fixed random seeds. The public release should include code and processed dataset links for exact rebuilds.

\subsection{H.6 Reproduction and Fine-Tuning Details}
The public release should include the dataset builder, fixed seeds, processed splits, and experiment scripts. Fine-tuning details for the Gemma-2-9B audit model (Experiment~3): LoRA with $r{=}32$, $\alpha{=}64$, dropout 0.05 targeting attention and MLP projection modules; BF16 with FlashAttention-2; max sequence length 2048; gradient accumulation 64 with per-device batch size 1; 1 epoch; learning rate $1.5\times 10^{-5}$; weight decay 0.1; cosine schedule with 0.03 warmup ratio; seed 42.

\subsection{H.7 Licensing and Ethics}
All source datasets used under permissive licenses: FEVER (CC-BY-SA 4.0), HotpotQA (CC-BY-SA 4.0), NQ-Open (CC-BY 4.0), PopQA (MIT). No personally identifiable information included. Questions focus on factual knowledge rather than subjective opinions.

\fi
\end{document}